\newif\ifmylinenumbers
\newcommand*\patchAmsMathEnvironmentForLineno[1]{%
   \expandafter\let\csname old#1\expandafter\endcsname\csname #1\endcsname
   \expandafter\let\csname oldend#1\expandafter\endcsname\csname end#1\endcsname
   \renewenvironment{#1}%
      {\linenomath\csname old#1\endcsname}%
      {\csname oldend#1\endcsname\endlinenomath}}%
\newcommand*\patchBothAmsMathEnvironmentsForLineno[1]{%
   \patchAmsMathEnvironmentForLineno{#1}%
   \patchAmsMathEnvironmentForLineno{#1*}}%
\newcounter{EditBlock}
\newcounter{CondEditBlock}
\newcommand{\ev}[1]{\mathbb{E} \left [ #1 \right ] }
\newcommand{\evwrt}[2]{\mathbb{E}_{#1} \left [ #2 \right ] }
\newcommand{\snorm}[1]{\Vert #1 \Vert}
\newcommand{\one}[1]{\mathbbm{1} \left [ #1 \right ]}
\newcommand{\reals}{\mathbb{R}}
\newtheorem{lem}{Lemma}
\newtheorem*{lem*}{Lemma}
\newtheorem{thm}{Theorem}
\newtheorem{definition}{Definition}
\newtheorem{prop}{Proposition}
\newtheorem{example}{Example}
\newcommand{\tsum}{\mathsf{TSum}}
\newcommand{\vx}{\vec{x}}
\newcommand{\vz}{\vec{z}}
\newcommand{\vu}{\vec{u}}
\newcommand{\vup}{\vec{u}'}
\newcommand{\vv}{\vec{v}}
\newcommand{\vw}{\vec{w}}
\newcommand{\vmu}{\vec{\mu}}
\newcommand{\vxp}{\vec{x}'}
\let\vec\bm
\newcommand{\mN}{\mathcal{N}}
\newcommand{\mJ}{\mathcal{J}}
\newcommand{\mW}{\mathcal{W}}
\newcommand{\mB}{\mathcal{B}}
\newcommand{\mC}{\mathcal{C}}
\newcommand{\mD}{\mathcal{D}}
\DeclareMathOperator*{\argmin}{arg\,min}
\newcommand{\roberr}{\mathcal{L}}
\newcommand{\sgn}{\text{sgn}}
\algnewcommand\algorithmicinput{\textbf{Input:}}     %
\algnewcommand\INPUT{\item[\algorithmicinput]}       %
\algnewcommand\algorithmicoutput{\textbf{Output:}}   %
\algnewcommand\OUTPUT{\item[\algorithmicoutput]}     %
\algrenewcommand\algorithmicrequire{\textbf{Input:}} %
\algrenewcommand\algorithmicensure{\textbf{Output:}} %
\newcommand{\tvxp}{\widetilde{\vec{x}}'}
\newcommand{\mT}{\mathcal{T}}
\newcommand{\tmT}{\widetilde{\mathcal{T}}}
\newcommand{\tT}{\widetilde{T}}
\newcommand{\valpha}{\vec{\alpha}}
\newcommand{\vbeta}{\vec{\beta}}
\newcommand{\mF}{\mathcal{F}}
\newcommand{\admiss}{A} 
\newcommand{\tell}{\widetilde{\ell}}
\newcommand{\hmL}{\widehat{\mathcal{L}}}
\newcommand{\hvw}{\widehat{\vec{w}}}
\newcommand{\mZ}{\mathcal{Z}}
\newcommand{\tvx}{\widetilde{\vec{x}}}
\title{Generalization Properties of Adversarial Training for $\ell_0$-Bounded
  Adversarial Attacks\footnote{This paper was presented in part at 2023 IEEE Information Theory Workshop (ITW)}}
\author{Payam Delgosha\thanks{Department of Computer Science, University of
    Illinois at Urbana-Champaign, IL, \texttt{delgosha@illinois.edu}}
  \qquad Hamed Hassani\thanks{Department of Electrical and Systems Engineering,
    University of Pennsylvania, Philadelphia, PA, \texttt{hassani@seas.upenn.edu}}
  \qquad Ramtin Pedarsani\thanks{Department of Electrical and Computer
    Engineering, University of California, Santa Barbara, Santa Barbara, CA,
    \texttt{ramtin@ece.ucsb.edu}}}
\begin{document}

\maketitle

\begin{abstract}
    We have widely observed that neural networks are vulnerable to small additive perturbations to the input causing misclassification. In this paper, we focus on the $\ell_0$-bounded adversarial attacks, and aim to theoretically characterize the performance of adversarial training for an important class of \emph{truncated} classifiers. Such classifiers are shown to have strong performance empirically, as well as theoretically in the Gaussian mixture model, in the $\ell_0$-adversarial setting. The main contribution of this paper is to prove a novel generalization bound for the binary classification setting with $\ell_0$-bounded adversarial perturbation that is distribution-independent. Deriving a generalization bound in this setting has two main challenges: (i) the truncated inner
  product which is highly non-linear; and (ii) maximization over the $\ell_0$
  ball due to adversarial training is non-convex and highly  non-smooth. To
  tackle these challenges, we develop new
  coding techniques
  for bounding the combinatorial dimension of the truncated hypothesis class. 
\end{abstract}

\section{Introduction}
\label{sec:intro}

It is well-known that machine learning models are susceptible to adversarial attacks that can cause classification error. These attacks are typically in the form of a small norm-bounded perturbation to the input data that are carefully designed to incur misclassification  -- e.g. they can be an additive $\ell_p$-bounded perturbation for some $p\geq 0$ \cite{biggio2013evasion,szegedy,goodfellow2014explaining,carlini2017}.

There is an extensive body of prior work studying adversarial machine learning, most of which have focused on $\ell_2$ and $\ell_\infty$ attacks \cite{carlini2018,bhattacharjee2021consistent, bhattacharjee2020sample,wong2018provable,raghunathan2018certified}. 
To train models that are more robust against such attacks, adversarial training
is the state-of-the-art defense method. However, the success of the current
adversarial training methods is mainly based on empirical  evaluations \cite{Madry_ICLR}.  It is therefore imperative to study the fundamental limits of robust machine learning under different classification settings and attack models.

In this paper, we focus on the case of $\ell_0$-bounded attacks that
has been less investigated so far. In such attacks,  given an $\ell_0$ budget $k$, an adversary can
change $k$ entries of the input vector in an arbitrary fashion -- i.e. the adversarial perturbations belong to the $\ell_0$-ball of radius $k$. In contrast with
$\ell_p$-balls \textcolor{black}{($p \geq 1$)}, the  $\ell_0$-ball
is non-convex and non-smooth.
Moreover, the $\ell_0$-ball
contains inherent discrete (combinatorial) structures that can be exploited by
both the learner and the adversary. As a result, the $\ell_0$-adversarial
setting bears various challenges that are absent in common $\ell_p$-adversarial settings. In this regard, it has recently been shown that any piece-wise linear classifier, e.g. a
feed-forward neural  network with ReLu activations, completely fails in the
$\ell_0$ setting \cite{shamir2019simple}.

Perturbing only a few components of the data or signal has many real-world  applications
{\color{black}including} natural
language processing~\cite{jin2019bert},  malware
detection~\cite{grosse2016adversarial}, and physical attacks in object  detection~\cite{li2019adversarial}. 
There have been several prior works on $\ell_0$-adversarial attacks including white-box attacks that are gradient-based,
e.g.~\cite{carlini2017,papernot2016limitations,modas2019sparsefool}, and black-box attacks 
based on zeroth-order optimization,
e.g.~\cite{schott2018towards,croce2020sparse}. Defense strategies against
$\ell_0$-bounded attacks have also been proposed, e.g. defenses based on
randomized ablation~\cite{levine2020robustness} and defensive
distillation~\cite{papernot2016distillation}. None of the above works have
studied the fundamental limits of the $\ell_0$-adversarial setting
theoretically.



Recently, \cite{delgosha2021robust} proposed a 
classification algorithm called \texttt{FilTrun} and showed that it is robust
against $\ell_0$ adversarial attacks in a Gaussian mixture setting.
Specifically, they show that asymptotically as the data dimension gets large, no
other classification algorithm can do better than \texttt{FilTrun} in the
presence of adversarial attacks. Their algorithm consists of two component,
namely truncation and filteration. Although truncation can be efficiently
implemented, filteration is computationally expensive. Later,
\cite{beliaev2022efficient} proposed that  employing truncation in a neural
network architecture together with adversarial training results in a
classification algorithm which is robust against $\ell_0$ attacks. They proved
this for the  Gaussian mixture setting in an asymptotic scenario as the
data dimension goes to infinity. Furthermore, they demonstrated the
effectiveness of their proposed method against $\ell_0$ adversarial attacks  through experiments. 


In the previous theoretical results in $\ell_0$--bounded adversarial attacks,
it is assumed that the data distribution is  in the
form of a Gaussian mixture with known parameters, and the focus is on showing
the asymptotic optimality of the proposed architecture as the data dimension
goes to infinity. In practical supervised learning scenarios, we usually have
indirect access to the distribution through  i.i.d.\ training
data samples. In this setting, adversarial training is a natural method for
learning model parameters that are robust against adversarial attacks, as shown empirically in the prior work. 

Motivated by the theoretical and empirical success of truncation against
$\ell_0$ adversarial attacks,
in this paper we study its generalization
properties.
Generalization properties of adversarial training have been studied for other
adversarial settings   (for
instance \cite{schmidt2018adversarially}, \cite{yin2019rademacher}, \cite{montasser2019vc}, \cite{raghunathan2019adversarial}, \cite{attias2019improved}, \cite{feige2015learning}, \cite{awasthi2020rademacher}, \cite{khim2018adversarial}, \cite{najafi2019robustness}, and \cite{xing2021generalization}) mainly involving $\ell_p, p \geq 1$.
There are challenges inherent in the $\ell_0$ setting which make the standard techniques
 inapplicable. In this
paper, we discuss these challenges and  develop novel techniques to address this
problem. 
 We believe that the proposed mathematical techniques in this work are of independent interest and potentially 
 have applications in other generalization settings which are combinatorial
 in nature, such as neural network architectures equipped with  truncation
 components for robustness purposes.

\noindent\textbf{Summary of Contributions.} Our main contributions are as
follows:
\vspace{-2mm}
\begin{itemize}
\item
We consider a binary classification setting in the presence of an $\ell_0$
adversary with truncated linear classifiers as
our hypothesis class. We prove a generalization bound in this setting that is
distribution-independent, i.e.\ it holds for any distribution on the data (see
Theorem~\ref{thm:main-result} in Section~\ref{sec:main-results}).
\vspace{-2mm}
\item We observe that due
to the complex and combinatorial nature of our problem, the classical techniques
for bounding the combinatorial dimension  and the VC dimension are not
applicable to our setting (see the discussion in Section~\ref{sec:main-results}
and Appendix~\ref{app:tip-challenges}). To this end,  we introduce novel
techniques  that may 
be generalized to problems involving non-linear  and combinatorial operations.
\vspace{-2mm}
\item Specifically, there are two key challenges in
  bounding the combinatorial dimension in our setting: $(a)$ the truncated inner
  product which is highly non-linear, and $(b)$ the inner maximization over the $\ell_0$ 
  ball due to adversarial training, which
  is challenging to work with as it is non-convex and highly  non-smooth. It is worth mentioning that as \cite{montasser2019vc} has shown, it is possible that the original hypothesis class (truncated inner products in our case) has a finite VC dimension, but the corresponding adversarial setting  is only PAC learnable with an improper learning rule. Therefore, it is crucial in our work to resolve the two challenges individually and show that the VC dimension is finite even in the adversarial setting proving proper robust PAC learnability.
\vspace{-2mm}
\item We tackle the first challenge by employing a novel \emph{coding}
  technique, which encodes the sign of the truncated inner product  by a finite
  number of conventional inner products. This enables us to bound the
  growth function using the known bounds on the VC dimension of conventional
  inner product (see Proposition~\ref{prop:informal-tip-growth} in
  Section~\ref{sec:main-results} and its formal version
  Proposition~\ref{prop:tip-growth-bound}).
We tackle the second challenge by decomposing our loss function into two
  terms, one which does not involve maximization over the $\ell_0$, and one
  which involves studying the range of the truncated inner product over the
  $\ell_0$ ball (see the discussion in
  Section~\ref{sec:main-results-Pi-tT-bound-max} and
  Propositions~\ref{prop:informal-tT-max-growth} and~\ref{prop:max-l0-growth-bound}).
\end{itemize}







\section{Problem Formulation}
\label{sec:problem-formulation}

We consider the binary classification problem where the true label is denoted by
$y \in \{\pm 1\}$, and the feature vector has dimension $d$ and is denoted by
$\vx \in \reals^d$. We denote the joint distribution of $(\vx, y)$ by $\mD$. A
classifier is a function $\mC: \reals^d \rightarrow \{\pm 1\}$ which predicts
the label from the input. We consider the 0-1 loss $\ell(\mC; \vx, y) :=
\one{\mC(\vx) \neq y}$. 
We study classification under $\ell_0$ perturbations;
i.e.\ the adversary can perturb the input $\vx$ to $\vxp \in \reals^d$ where the
$\ell_0$ distance  between the two vectors which is  defined as
\begin{equation*}
  \snorm{\vx - \vxp}_0 := \sum_{i=1}^d \one{x_i \neq x'_i},
\end{equation*}
is bounded. In other words, the adversary can modify the input $\vx$ to any
other vector $\vxp$ within the $\ell_0$ ball of radius $k$ around $\vx$ defined as
\begin{equation*}
  \mB_0(\vx, k) := \{\vxp \in \reals^d: \snorm{\vx - \vxp}_0 \leq k\}.
\end{equation*}
Here, $k$ is the \emph{budget} of the adversary, and is effectively the number
of input coordinates that the adversary is allowed to change. The robust
classification error (or robust error for short) of a classifier $\mC$ when the
adversary has $\ell_0$ budget $k$ is defined as
\begin{equation}
  \label{eq:rob-err-def}
  \roberr_{\mD}(\mC, k) := \evwrt{(\vx, y) \sim \mD}{\tell_k(\mC; \vx, y)},
\end{equation}
where
\begin{equation}
  \label{eq:tell-def}
  \tell_k(\mC; \vx, y) := \max_{\vxp \in \mB_0(\vx, k)}\ell(\mC; \vxp, y).
\end{equation}
Here, $(\vx,y) \sim \mD$ means that the feature vector-label pair $(\vx, y)$ has
distribution $\mD$, and the maximum represents the adversary which can perturb
the input vector $\vx$ arbitrarily within the $\ell_0$ ball $\mB_0(\vx, k)$.


\vspace{2mm}

\noindent\textbf{Overview of Prior Results.} The authors of \cite{delgosha2021robust} study the above problem in the setting
of the Gaussian mixture model. More precisely, they assume that $y \sim
\text{Unif}\{\pm 1\}$, and conditioned on $y$, $\vx \sim \mN(y \vmu, \Sigma)$ is
normally distributed with mean $y \vmu$ and covariance matrix $\Sigma$. Here,
$\vmu \in \reals^d$, and $\Sigma$ is a positive-definite matrix. They study this
problem in an asymptotic fashion as the dimension $d$ goes to infinity. They
propose an algorithm called \texttt{FilTrun} and prove that it is
asymptotically optimal when $\Sigma$ is diagonal. Here, asymptotic optimality
means that asymptotically as the dimension $d$ goes to infinity, the robust error of \texttt{FilTrun} gets close to the
optimal robust  error, defined as the minimum of the
robust  error over all possible classifiers. \texttt{FilTrun} makes use of two
components, namely Filteration and Truncation.
\begin{itemize}
\item \textbf{Filteration} refers to a preprocessing phase, where upon receiving
  the perturbed data vector $\vxp$, we remove certain coordinates, or
  effectively set them to zero. The purpose of filteration is to remove the
  \emph{non-robust} coordinates. Let us denote the output of the filteration
  phase by $\tvxp$.
\item \textbf{Truncation} refers to applying the \emph{truncated inner product}
  of an appropriate weight vector $\vw$ by the output of the filteration phase
  $\tvxp$. More precisely,  the weight vector $\vw$ is chosen appropriately based on the
  distribution parameters $\vmu$ and $\Sigma$, and  the classification
  output is computed based as the sign of the truncated inner product $\langle \vw, \tvxp
  \rangle_k$ defined as follows. Let $\vu := \vw \odot \tvxp$ be the
  coordinate-wise product of $\vw$ and $\tvxp$, and let $u_{(1)} \leq u_{(2)}
  \leq \dots \leq u_{(d)}$ be the values in $\vu$ after sorting. Then,
  \begin{equation}
    \label{eq:truncated-inner-product-def}
    \langle \vw, \tvxp \rangle_k := \sum_{i=k+1}^{d-k} u_{(i)}.
  \end{equation}
  Effectively, $\langle \vw, \tvxp \rangle_k$ is the summation of the values in
  the coordinate-wise product of $\vw$ and $\tvxp$ after removing the $k$ largest
  and the $k$ smallest values. Note that when $k=0$, this reduces to the usual
  inner product, and removing the top and bottom $k$ values effectively removed
  the \emph{outliers} in the input, which are possibly caused by the adversary. 
\end{itemize}

Although truncation can be implemented in a computationally efficient way,
filteration turns out to be computationally expensive. The authors
in \cite{beliaev2022efficient}  have shown that in the Gaussian mixture setting
with diagonal covariance matrix, optimizing for the weight vector $\vw$ in the
class of truncated classifiers of the form $\mC^{(k)}_{\vw}(\vxp) := \sgn(\langle \vw, \vxp
\rangle_k)$ results in an asymptotically optimal classifier as the data
dimension goes to infinity. In other words, they  optimize for
$\vw$ in 
$\roberr_{\mD}(\mC^{(k)}_{\vw}, k)$ where $\mD$ is the Gaussian mixture
distribution. Note that the truncation parameter is chosen to be the same as the
adversary's budget $k$. 
Effectively, optimizing for $\vw$ automatically takes care of the filteration
phase since the weight of the ``non-robust'' features could be set to zero in $\vw$.
Motivated by this, they propose a neural network architecture
where the inner products in the first layer are replaced by truncated inner
products. Furthermore, they show through several experiments that in practice when we do not have
access to the distribution parameters, adversarial training as a proxy for
optimizing the model parameters results in an efficient and robust classifier.


\vspace{2mm}
\noindent\textbf{Adversarial Training for Parameter Tuning.}
In the theoretical analysis in above mentioned works,
it is assumed that the distribution $\mD$ is in the
form of a Gaussian mixture with known parameters, and therefore we can optimize
for the model parameters within the proposed architecture. In practice, we
usually do not have access to the distribution $\mD$. Instead, we usually have  i.i.d.\ training data samples $(\vx_1, y_1), \dots,
(\vx_n, y_n)$ distributed according to $\mD$. We stick to the usual
setting in the  adversarial attacks framework, where the training data is clean, while
the test data is perturbed by the adversary, and the objective is the robust
error at the test time. In this setting, adversarial training is a natural
choice for finding the model parameters. Motivated by the prior work
described above, we consider the
hypothesis class of truncated linear classifiers of the form
\begin{equation*}
  \mC^{(k)}_{\vw} : \vx \mapsto \sgn(\langle \vw, \vx \rangle_k),
\end{equation*}
parametrized by $\vw \in \reals^d$, where for $\alpha \in \reals$, $\sgn(\alpha)
:= +1$ when $\alpha \geq 0$, and $\sgn(\alpha) := -1$ when $\alpha < 0$. Also,
motivated by the prior work mentioned above, we set $k$ to be equal to the adversary's
budget. Furthermore, note that we are comparing $\langle \vw, \vx \rangle_k$
with zero. This is without loss of generality, since we may assume that there is a coordinate in $\vx$ with
constant value 1. Since we focus on this hypothesis class, we use the shorthand
notation $\roberr_{\mD}(\vw, k)$ for the robust error of the classifier
$\mC^{(k)}_{\vw}$, i.e.\ for $\vw \in \reals^d$, we define
\begin{equation}
  \label{eq:roberr-w-roberr-Ckw-shorthand}
  \roberr_{\mD}(\vw, k) := \roberr_{\mD}(\mC^{(k)}_{\vw}, k).
\end{equation}

Adversarial training in this scenario translates to choosing the hypothesis
parameter $\vw \in \reals^d$ by minimizing the adversarial empirical  loss
\begin{equation}
  \label{eq:hat-L-n-def}
  \hmL_n(\vw, k) := \frac{1}{n} \sum_{i=1}^n \tell_k(\mC^{(k)}_{\vw}; \vx_i, y_i).
\end{equation}
Recall that $\tell_k(\mC^{(k)}_{\vw}; \vx_i, y_i)$ which was defined
in~\eqref{eq:tell-def} is the maximum zero-one loss over the $\ell_0$ ball
around the $i$th data sample. Effectively, we assume that we have access to a
perfect  adversary during the training phase which allows us to have access to
$\tell_k(\mC^{(k)}_{\vw}; \vx_i, y_i)$. Let
\begin{equation}
  \label{eq:hwn-def}
  \hvw_n \in \argmin_{\vw \in \reals^d} \hmL_n(\vw,k),
\end{equation}
be the hypothesis parameter vector which is obtained by optimizing the above adversarial empirical
 loss over the training dataset. In this paper, we analyze the
generalization properties of the adversarial training for the above hypothesis
class of linear truncated classifiers. More precisely, our main question is
whether the robust error corresponding to  $\hvw_n$ (which is obtained by employing adversarial
training) converges to the robust error of the best truncated linear classifier
in our hypothesis class. More precisely, if
\begin{equation}
  \label{eq:wstar-def}
  \vw^\ast \in \argmin_{\vw \in \reals^d} \roberr_{\mD} (\mC^{(k)}_{\vw}, k),
\end{equation}
correspond to the best classifier in our hypothesis class, can we show that as the number of samples $n$ goes to
infinity,
$\roberr_{\mD}(\hvw_n, k)$ converges to
$\roberr_{\mD}(\vw^\ast, k)$? This question is formalized in the following definition.

\begin{definition}[robust PAC learnability]
\label{def:pac}
We say that a hypothesis class $\mathcal{H}$ is robust PAC learnable with respect to an $\ell_0$ adversary with budget $k$, if there exists a learning algorithm $\mathcal{A}$ such that for any $\epsilon, \delta >0$, and for any distribution $\mathcal{D}$, $\mathcal{A}$ maps i.i.d.\ data samples $\mathcal{S} = ((\vx_i, y_i), 1 \leq i \leq n)$ to $\mathcal{A}(\mathcal{S}) \in \mathcal{H}$, such that if $n > m(\epsilon, \delta)$, with probability at least $1 -\delta$, we have 
\begin{equation*}
\roberr_{\mD}(\mathcal{A}(\mathcal{S}),k) \leq \inf_{h \in \mathcal{H}} \roberr_{\mD}(h,k) + \epsilon.
\end{equation*}
\end{definition}

\vspace{2mm}
\noindent\textbf{Notation.}  $[n]$ denotes the set $\{1, \dots, n\}$. We denote vectors with boldface notation. Given a  vector $\vu= (u_1, \dots, u_d) \in \reals^d$, we denote by $u_{(1)} \leq 
\dots \leq  u_{(d)}$ the vector containing elements in $\vu$ in a non-decreasing
order. For instance, for $\vu = (3,1,1)$, we have $u_{(1)} = 1, u_{(2)} = 1$,
and $u_{(3)} = 3$. Given  $\vu, \vv \in \reals^d$, $\vu \odot \vw \in \reals^d$ is
defined to be the element-wise product of $\vu$ and $\vv$, i.e.\ its $i$th
coordinate is $u_i v_i$. For vectors $\vw, \vx \in \reals^d$ and integer $k <
d/2$,  with $\vu := \vw \odot \vx$, the $k$--truncated inner product between $\vw$ and $\vx$ is defined by 
$\sum_{i=k+1}^{d-k} u_{(i)}$.


\section{Main Results}
\label{sec:main-results}

Our main result is to show that for any distribution $\mD$, the robust error of
the classifier trained by adversarial training $\roberr_{\mD}(\hvw_n, k)$ converges to
$\roberr_{\mD}(\vw^\ast, k)$;  in other words, the class of truncated inner products in our adversarial setting is robustly PAC learnable as formalized in Definition~\ref{def:pac}. This is a direct consequence of the following
Theorem~\ref{thm:main-result}. The proof details are given in Section~\ref{sec:formal-analysis}. In the
remaining of this section, we explain the ideas and  main steps of
the proof.

\begin{thm}
  \label{thm:main-result}
  For any joint distribution $\mD$ on the label $y \in \{\pm 1\}$ and
  feature-vector $\vx \in \reals^d$, and any adversarial budget $0< k < d/2$,
  for $n > d+1$, if
  $\hvw_n$ denotes the model parameters obtained from adversarial training as
  in~\eqref{eq:hwn-def}, with probability at least $1- \delta$, we have
  \begin{equation*}
    \roberr_{\mD}(\hvw_n, k) \leq \roberr_{\mD}(\vw^\ast, k) + c \sqrt{\frac{d  \log \frac{en \left[ \binom{d}{2k} + \binom{d}{2}\right]}{d}}{ n}}  + 5 \sqrt{\frac{2 \log \frac{8}{\delta}}{n}},
  \end{equation*}
  where $c$ is a universal constant. 
\end{thm}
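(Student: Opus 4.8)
The plan is to carry out the standard agnostic‑PAC reduction and then spend essentially all of the effort on a growth‑function (combinatorial dimension) estimate for the adversarial loss class. Since $\hvw_n$ minimizes $\hmL_n(\cdot,k)$ and $\vw^\ast$ minimizes $\roberr_\mD(\cdot,k)$, the usual two‑sided comparison gives
\begin{equation*}
\roberr_\mD(\hvw_n,k)-\roberr_\mD(\vw^\ast,k)\;\le\;2\sup_{\vw\in\reals^d}\bigl|\hmL_n(\vw,k)-\roberr_\mD(\vw,k)\bigr| ,
\end{equation*}
and since the $(\vx_i,y_i)$ are i.i.d.\ with $\ev{\hmL_n(\vw,k)}=\roberr_\mD(\vw,k)$, the right‑hand side is a uniform deviation over the $\{0,1\}$‑valued function class $\mF:=\{(\vx,y)\mapsto\tell_k(\mC^{(k)}_{\vw};\vx,y):\vw\in\reals^d\}$. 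It therefore suffices to show that the growth function of $\mF$ obeys $\log\Pi_\mF(n)\le d\log\frac{en[\binom{d}{2k}+\binom{d}{2}]}{d}$; the hypothesis $n>d+1$ is exactly what is needed for the Sauer‑type cell count used below to apply.

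For the growth‑function bound I would fix a sample $(\vx_1,y_1),\dots,(\vx_n,y_n)$ and count the loss patterns produced as $\vw$ ranges over $\reals^d$, the goal being to show that each $\tell_k(\mC^{(k)}_{\vw};\vx_\ell,y_\ell)$ changes value only when $\vw$ crosses one of a controlled number of hyperplanes. The decisive observation is that, for a \emph{fixed} data point $\vx$, everything is piecewise linear in $\vw$: the ordering of the coordinatewise products $w_1x_1,\dots,w_dx_d$ is governed by the $\binom{d}{2}$ linear‑in‑$\vw$ comparisons $\{w_ix_i=w_jx_j\}$, and on any cell of this arrangement the set $M$ of ``middle'' coordinates is fixed, so $\langle\vw,\vx\rangle_k=\langle\vw,\vx\odot\mathbbm{1}_M\rangle$ is linear in $\vw$; hence $\sgn\langle\vw,\vx\rangle_k$ is, cell by cell, the sign of one of the $\binom{d}{2k}$ linear functionals $\vw\mapsto\langle\vw,\vx\odot\mathbbm{1}_T\rangle$, $T\in\binom{[d]}{d-2k}$ — this is the ``coding'' behind Proposition~\ref{prop:tip-growth-bound}. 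For the adversarial (inner‑max) part I would prove the combinatorial identity that, up to degenerate choices of $\vw$, the adversary's optimal play satisfies
\begin{equation*}
\max_{\vxp\in\mB_0(\vx,k)}\langle\vw,\vxp\rangle_k=\max_{|T|=d-2k}\langle\vw,\vx\odot\mathbbm{1}_T\rangle,\qquad
\min_{\vxp\in\mB_0(\vx,k)}\langle\vw,\vxp\rangle_k=\min_{|T|=d-2k}\langle\vw,\vx\odot\mathbbm{1}_T\rangle ,
\end{equation*}
the intuition being that pushing $k$ coordinates to $\pm\infty$ forces them to be truncated away, so the adversary effectively deletes $2k$ coordinates in total; consequently $\tell_k(\mC^{(k)}_{\vw};\vx,+1)=\mathbbm{1}[\min_{|T|=d-2k}\langle\vw,\vx\odot\mathbbm{1}_T\rangle<0]$ and $\tell_k(\mC^{(k)}_{\vw};\vx,-1)=\mathbbm{1}[\max_{|T|=d-2k}\langle\vw,\vx\odot\mathbbm{1}_T\rangle\ge0]$, i.e.\ a disjunction/conjunction of the \emph{same} family of $\binom{d}{2k}$ ordinary linear thresholds in $\vw$ (Proposition~\ref{prop:max-l0-growth-bound}). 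Combining the two ingredients, the entire loss pattern of $\mF$ on the sample is determined by the sign vector of $\vw$ against a fixed arrangement of at most $n\bigl(\binom{d}{2}+\binom{d}{2k}\bigr)$ hyperplanes in $\reals^d$ ($\binom{d}{2}$ order comparisons and $\binom{d}{2k}$ threshold hyperplanes per data point), and the number of cells of an arrangement of $H\ge d$ hyperplanes in $\reals^d$ is at most $\sum_{i=0}^d\binom{H}{i}\le(eH/d)^d$; this yields $\Pi_\mF(n)\le\bigl(en[\binom{d}{2k}+\binom{d}{2}]/d\bigr)^d$, as desired.

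Given the growth‑function bound, the remainder is routine: symmetrization followed by Massart's finite‑class lemma bounds the Rademacher complexity of $\mF$ by (a constant times) $\sqrt{\log\Pi_\mF(n)/n}$, and a bounded‑differences (McDiarmid) argument turns the in‑expectation bound into a high‑probability one, giving $\sup_{\vw}|\hmL_n(\vw,k)-\roberr_\mD(\vw,k)|\le c'\sqrt{\frac{d\log\frac{en[\binom{d}{2k}+\binom{d}{2}]}{d}}{n}}+c''\sqrt{\frac{\log(1/\delta)}{n}}$ with probability at least $1-\delta$; plugging this into the ERM inequality above and loosening constants to the stated $c$ and $5\sqrt{2\log(8/\delta)/n}$ proves the theorem. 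I expect the genuine difficulty to lie entirely in the growth‑function estimate — specifically in (i) pinning down the exact value of $\max_{\vxp\in\mB_0(\vx,k)}\langle\vw,\vxp\rangle_k$, including the degenerate regime where $\vw$ has fewer than $2k$ nonzero coordinates and the clean identity above must be corrected, and (ii) verifying that, across all cells of the order arrangement and all $n$ sample points, the hyperplanes one must add in order to read off the signs of the truncated (and adversarially perturbed) inner products form a set of size only $O(n\binom{d}{2k})$ rather than something exponential; this is precisely where the ``coding'' and ``loss‑decomposition'' techniques advertised in the introduction carry the argument.
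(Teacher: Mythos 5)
Your proposal follows essentially the same route as the paper: the ERM-to-uniform-deviation reduction plus Massart's lemma is exactly \cite[Theorem~26.5]{shalev2014understanding} combined with~\eqref{eq:rad-growth-bound}; your coding of $\sgn(\langle\vw,\vx\rangle_k)$ by the $\binom{d}{2}$ order comparisons and the $\binom{d}{2k}$ subset-sum functionals is Lemma~\ref{lem:trp-sign-code}; your min/max identity over $\mB_0(\vx,k)$ is Lemma~\ref{lem:tsum-l0-min-max}; and your cell count for the arrangement of $n\bigl(\binom{d}{2k}+\binom{d}{2}\bigr)$ central hyperplanes in $\reals^d$ is the same quantity as the Sauer--Shelah count of configurations of the sign matrix $M(\vw)$ in Propositions~\ref{prop:tip-growth-bound} and~\ref{prop:max-l0-growth-bound}. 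The one piece you flag but do not carry out --- the degenerate $\vw$ --- is resolved in the paper by splitting at $k$ (not $2k$) nonzero coordinates: for $|\{i: w_i \neq 0\}| \le k$ the truncated inner product vanishes identically (Lemma~\ref{lem:vw-at-most-k-nonzero-trp-zero}), while for more than $k$ nonzero coordinates the exact value identity you state can fail (it also depends on $\vx$, via Lemma~\ref{lem:inf-sup-w-x-k-equality-cond}) but the sign-level statement needed for the pattern count survives (Lemma~\ref{lem:Aw-large-k-I1-I2}).
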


In order to obtain this bound, it suffices to  bound the Rademacher complexity of  our hypothesis class composed with the
loss $\tell$ defined in~\eqref{eq:tell-def} and use standard bounds such as
\cite[Theorem~26.5]{shalev2014understanding}.
Let $\tmT_{d,k} \subset \{-1,+1\}^{\reals^d \times \{\pm 1\}}$  be the class of
functions $\tT_{\vw, k}$ parametrized by $\vw \in \reals^d$ obtained by
applying the loss $\tell$ to the truncated linear classifier $\mC^{(k)}_{\vw}$, i.e.\
\begin{equation}
\label{eq:tT-wk-def}
  \tT_{\vw, k}(\vx, y) := \tell_k(\mC^{(k)}_{\vw}; \vx, y) = \max_{\vxp \in \mB_0(\vx, k)} \one{y \neq \sgn(\langle \vw, \vxp \rangle_k)}.
\end{equation}
Note that since the loss $\tell$ is the maximum of a zero-one loss, the
range of the functions in $\tmT_{d,k}$ is indeed $\{-1, +1\}$. 
To simplify the notation, with $\mZ:= \reals^d \times \{\pm 1\}$, we denote the
feature vector-label pair $(\vx,  y)$ by
$\vz \in \mZ$. In general,  
the Rademacher
complexity of a  function class $\mF \subset [-1,1]^{\mZ}$ is defined to be
\begin{equation}
  \label{eq:radem-complx-def}
  R_n(\mF) := \ev{\sup_{f \in \mF} \left| \frac{1}{n} \sum_{i=1}^n \epsilon_i f(\vz_i) \right|},
\end{equation}
where expectation is taken with respect to i.i.d.\ Rademacher random variables
$\epsilon_i \in \{\pm 1\}$ and i.i.d.\ samples $\vz_i = (\vx_i, y_i)$ with law $\mD$.
In the classification setting, where the function class is of the form $\mF
\subset \{\pm 1\}^{\mZ}$, 
the Rademacher complexity is related to a combinatorial dimension, known as the
VC dimension \cite{vapnik2015uniform}. Specifically,  the \emph{growth function}
of such a  function class
$\mF$ is defined as
\begin{equation*}
  \Pi_{\mF}(n) := \max\{|\{(f(z_1), \dots, f(z_n): f \in \mF\}|: z_i \in \mZ, 1 \leq i \leq n\}.
\end{equation*}
In words, the grown function captures the number of different $\pm 1$ patterns formed by
applying all the functions  in $\mF$ on $n$ data samples. Indeed,
$\Pi_{\mF}(n) \leq 2^n$, but in order to obtain useful bounds, we usually need
polynomial bounds on the growth function. We use the following standard bound on
the Rademacher complexity which follows from Massart lemma (see, for instance,
\cite[Lemma 26.8]{shalev2014understanding}) and holds  for any function class $\mF$ with range in $\{\pm 1\}$.
\begin{equation}
  \label{eq:rad-growth-bound}
  R_n(\mF) \leq \sqrt{\frac{2 \log (\Pi_{\mF}(n))}{n}}.
\end{equation}
Motivated by this, our strategy is to find  a bound for  
$\Pi_{\tmT_{d,k}}(n)$ which is polynomial in $n$.

Note that
from~\eqref{eq:tT-wk-def}, there are two challenges for bounding the
combinatorial dimension of the functions $\tT_{\vw, k}$: $(a)$ the truncated
inner product $\langle \vw, x \rangle_k$, and $(b)$ the maximization over the $\ell_0$
ball $\mB_0(\vx, k)$. These two components bring fundamental challenges beyond
those present in the usual machine learning scenarios where we deal with the
usual inner product and $\ell_p$ norms for $p \geq 1$. More precisely,
\vspace{-2mm}
\begin{enumerate}
\item  The truncated inner product $\langle \vw, \vx \rangle_k$ is
  not linear, i.e. $\langle \vw, \vx_1 + \vx_2\rangle_k$ is not necessarily
  equal to $\langle \vw, \vx_1 \rangle_k + \langle \vw, \vx_2 \rangle_k$.
As a concrete counterexample, let $d = 3$, $k=1$, $\vw = (1,1,1)$, $\vx_1 =
(10,9,-100)$, and $\vx_2 = (-100,1,2)$. Then, $\langle \vw, \vx_1 \rangle_k =
9$, $\langle \vw, \vx_2 \rangle_k = 1$, while $\langle \vw, \vx_1 + \vx_2
\rangle_k = -90$.
 Even
  worse than this, $\langle \vw, \vx_1 + \vx_2\rangle_k$ or even its sign cannot necessarily be
  uniquely determined by knowing both $\langle \vw, \vx_1 \rangle_k$ and $\langle
  \vw, \vx_2 \rangle_k$ or their signs (see Appendix~\ref{app:tip-challenges}
  for more examples).
\vspace{-2mm}
\item \label{item:ell-0-max-challenge}  The $\ell_0$ ball $\mB_0(\vx, k)$ is
  unbounded, non-convex,  and non-smooth. Due to
  this,  maximization over the
  ball is not tractable, unlike the case of $\ell_p$ balls for $p \geq 1$ (for
  instance \cite{yin2019rademacher} in the $\ell_\infty$ setting).
\end{enumerate}
\vspace{-2mm}
In Sections~\ref{sec:main-results-Pi-T-bound} and \ref{sec:main-results-Pi-tT-bound-max} below, we discuss the above two challenges. In order to
focus on these two challenges individually and to convey the main ideas, we first
study the function class corresponding to truncated inner products without
maximization over the $\ell_0$ ball. More precisely, let  $\mT_{d, k} \subset \{-1,1\}^{\reals^d}$ be the class of truncated inner
product functions of the form $T_{\vw,k} : \vx \mapsto \sgn(\langle \vw, \vx
\rangle_k)$, i.e.\ $\mT_{d,k} := \{T_{\vw, k}: \vw \in \reals^d\}$. In Section
\ref{sec:main-results-Pi-T-bound} below, we study the growth function $\Pi_{\mT_{d,k}}(n)$ of this function
class. Then, in Section \ref{sec:main-results-Pi-tT-bound-max}, we bring the maximization over the $\ell_0$ ball
into our discussion and study the growth function $\Pi_{\tmT_{d,k}}(n)$. Note
that in fact, $\mT_{d, k}$ is our hypothesis class, and $\tmT_{d,k}$ is the
composition of our hypothesis class with the maximized 0-1 loss $\tell$.

\subsection{Bounds on $\Pi_{\mT_{d,k}}(n)$}
\label{sec:main-results-Pi-T-bound}

Our main idea to bound the growth function $\Pi_{\mT_{d,k}}(n)$ is to encode the
truncated inner product in terms of a finite number of conventional inner products. Note that
$\langle  \vw, \vx \rangle_k$ is the sum of  $d-2k$ coordinates in $\vw
\odot \vx$. Therefore, if we know exactly which coordinates survive after
truncation, we can form the zero-one vector $\valpha$ where $\alpha_i$ is one
if the $i$th coordinate of $\vw \odot \vx$ survives after truncation, and is
zero otherwise. Then, it is easy to see that
\begin{equation*}
 \langle \vw, \vx \rangle_k =
\langle  \vw, \vx \odot \valpha \rangle,
\end{equation*}
 where the right hand side is the
conventional inner product (no truncation). However, the problem is that the vector
$\valpha$ is not  known beforehand, and it depends on the values in $\vw \odot
\vx$. But if we know the ordering of  $\vw \odot \vx$, we can form the
appropriate $\valpha$ by selecting the $d-2k$ intermediate values. In order to
address this, observe that the
ordering of values in $\vw \odot \vx$ can be determined by knowing the sign of
all $\binom{d}{2}$ pairwise terms of the form $w_i x_i - w_j x_j$ for $1 \leq i
< j \leq d$. But this can be in fact written as
\begin{equation*}
  w_i x_i - w_j x_j = \langle \vw, \vx \odot \vbeta \rangle,
\end{equation*}
where $\vbeta \in \reals^d$ is the vector whose $i$th coordinates is $+1$, 
$j$the coordinate is $-1$, and  other coordinates are zero. This discussion
motivates the following lemma
\begin{lem}[Lemma~\ref{lem:trp-sign-code} informal]
  \label{lem:informal-trp-sign-code}
  Given $\vw, \vx \in \reals^d$, $\sgn(\langle \vw, \vx \rangle_k)$ can be
  determined by knowing $\sgn(\langle \vw, \vx \odot \valpha_i \rangle)$ for $1
  \leq i \leq \binom{d}{2k}$, and $\sgn(\langle \vw, \vx \odot \vbeta_j
  \rangle)$ for $1 \leq j \leq \binom{d}{2}$. Here,  $\valpha_i$'s are the
  indicators of all the $\binom{d}{2k}$ subsets of size $d-2k$, and $\vbeta_j$'s are the vectors
  corresponding to all the $\binom{d}{2}$ pairs as in the above discussion. 
\end{lem}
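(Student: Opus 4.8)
The plan is to reconstruct $\sgn(\langle \vw, \vx \rangle_k)$ from the claimed sign information in two stages: first determine which coordinates of $\vu := \vw \odot \vx$ survive the truncation, and then evaluate the resulting ordinary inner product as a sign of a single conventional inner product. For the first stage, observe that for each pair $i<j$, the quantity $w_i x_i - w_j x_j = u_i - u_j$ equals $\langle \vw, \vx \odot \vbeta_{ij}\rangle$, where $\vbeta_{ij}$ has $+1$ in coordinate $i$, $-1$ in coordinate $j$, and $0$ elsewhere. Hence knowing $\sgn(\langle \vw, \vx \odot \vbeta_{ij}\rangle)$ for all $\binom{d}{2}$ pairs tells us, for every pair, whether $u_i < u_j$, $u_i > u_j$, or $u_i = u_j$. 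This is precisely enough to determine a valid total preorder on the coordinates consistent with the sorted order $u_{(1)} \le \dots \le u_{(d)}$.

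Next I would argue that this preorder information pins down which set $S \subseteq [d]$ of $d-2k$ coordinates is the one summed in~\eqref{eq:truncated-inner-product-def}, \emph{up to} a choice that does not affect the value of the truncated sum. Concretely, any set obtained by discarding $k$ coordinates from the top of the order and $k$ from the bottom yields the same value $\sum_{i=k+1}^{d-k} u_{(i)}$; ties at the truncation boundary can be broken arbitrarily without changing this sum. So we may fix any one such admissible $S$ as a deterministic function of the pairwise-sign data, and let $\valpha = \valpha_S$ be its indicator. Then $\langle \vw, \vx \rangle_k = \langle \vw, \vx \odot \valpha_S \rangle$, and since $\valpha_S$ is one of the $\binom{d}{2k}$ indicators of $(d-2k)$-subsets, $\sgn(\langle \vw, \vx\rangle_k)$ equals $\sgn(\langle \vw, \vx \odot \valpha_S\rangle)$, which is among the listed quantities. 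Reading off the correct index $i$ with $\valpha_i = \valpha_S$ completes the determination.

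I expect the main obstacle to be the careful handling of ties. When several coordinates of $\vu$ share the value at the truncation threshold, there is genuine ambiguity in which coordinates "survive," and one must verify that (i) the pairwise signs still determine a consistent preorder, (ii) every admissible truncation set gives the identical truncated sum, and (iii) the selection rule for $S$ can be made a well-defined function of the sign data alone (e.g.\ by ranking coordinates lexicographically within each tie class). Once this bookkeeping is in place, the rest is the straightforward observation that both "extracting the order" and "extracting the surviving set's inner product" are expressible through the sign of $\langle \vw, \vx \odot \vec{v}\rangle$ for suitable fixed $\vec{v}$ independent of $\vw, \vx$. A minor additional point is the boundary convention in $\sgn$ (namely $\sgn(0) = +1$): since the truncated sum can be exactly zero, one should confirm that the sign of $\langle \vw, \vx \odot \valpha_S\rangle$ is read with the same convention, which it is.
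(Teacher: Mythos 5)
Your proposal is correct and follows essentially the same route as the paper's proof of Lemma~\ref{lem:trp-sign-code}: use the $\vbeta_j$ signs to recover a sorted ordering $u_{i_1}\le\dots\le u_{i_d}$ of $\vu=\vw\odot\vx$, read off the surviving index set $\tilde{\mJ}$ from that ordering, and identify $\sgn(\langle\vw,\vx\rangle_k)$ with $\sgn(\langle\vw,\vx\odot\valpha_{i(\tilde{\mJ})}\rangle)$. Your explicit treatment of ties (noting that any admissible tie-breaking yields the same truncated sum, and that one sign per pair only distinguishes $\ge$ from $<$) is a point the paper leaves implicit, but it does not change the argument.
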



\begin{figure}
  \centering
  \begin{minipage}[t]{0.4\linewidth}
    \centering
    \scalebox{0.6}{
  \begin{tabular}{@{}lccr@{}}
    \toprule
    $i$ & $\valpha_i$ & $\valpha_i \odot \vx$ & $\sgn(\langle \vw, \vx \odot \valpha_i \rangle)$ \\ \midrule
    $1$ & $(1,1,0,0)$ & $(1,-1,0,0)$  & {\color{red}$-1$} \\
    $2$ & $(1,0,1,0)$ & $(1,0,2,0)$ & {\color{red}$-1$}\\
    $3$ & $(1,0,0,1)$ & $(1,0,0,-3)$ & {\color{red}$-1$}\\
    $4$ & $(0,1,1,0)$ & $(0,-1,2,0)$ & {\color{blue}$+1$}\\
    $5$ & $(0,1,0,1)$ & $(0,-1,0,-3)$ & {\color{blue}$+1$}\\
    \rowcolor{orange!50}
    $6$ & $(0,0,1,1)$ & $(0,0,2,-3)$ & {\color{red}$-1$} \\ \bottomrule
  \end{tabular}
}%


\end{minipage}%
\begin{minipage}[t]{0.6\linewidth}
  \centering
  \scalebox{0.6}{
  \begin{tabular}{@{}lcccr@{}}
    \toprule
    $i$ & $\vbeta_i$ & $\vbeta_i \odot \vx$ & {$\sgn(\langle \vw, \vx \odot \vbeta_i \rangle)$} & conclusion \\ \midrule
    $1$ & $(1,-1,0,0)$ & $(1,1,0,0)$  & {\color{red}$-1$} & $w_1 x_1 < w_2 x_2$\\
    $2$ & $(1,0,-1,0)$ & $(1,0,-2,0)$ & {\color{red}$-1$} & $w_1 x_1 < w_3 x_3$ \\
    $3$ & $(1,0,0,-1)$ & $(1,0,0,3)$ & {\color{red}$-1$} & $w_1 x_1 < w_4 x_4$ \\
    $4$ & $(0,1,-1,0)$ & $(0,-1,-2,0)$ & {\color{blue}$+1$} & $w_2 x_2 \geq w_3 x_3$ \\
    $5$ & $(0,1,0,-1)$ & $(0,-1,0,3)$ & {\color{blue}$+1$} & $w_2 x_2 \geq w_4 x_4$ \\
    $6$ & $(0,0,1,-1)$ & $(0,0,2,3)$ & {\color{blue}$+1$} & $w_3 x_3 \geq w_4 x_4$ \\ \bottomrule
  \end{tabular}
}%

\end{minipage}
  \caption{Illustration of Lemma~\ref{lem:informal-trp-sign-code} for $d=4$,
    $k=1$, $\vx = (1,-1,2,-3)$, and $\vw = (-5,-4,-1,1)$. From $\sgn(\langle
    \vw, \vx \odot \vbeta_j \rangle)$ for $1 \leq j \leq 6$ on the right, we realize that $w_1 x_1
    \leq w_4 x_4 \leq w_3 x_3 \leq w_2 x_2$. This means that $\langle \vw, \vx
    \rangle_k = w_3 x_3 + w_4 x_4 = \langle \vw, \vx \odot \valpha_6 \rangle$
    whose sign can be read from the highlighted row on the left table.}
  \label{fig:coding}
\end{figure}


Figure~\ref{fig:coding} illustrates Lemma~\ref{lem:informal-trp-sign-code} through an example.
In fact, Lemma~\ref{lem:informal-trp-sign-code} suggests that $T_{w,k}(\vx) =
\sgn(\langle \vw, \vx \rangle_k)$ can be ``coded'' in terms of the signs of $\binom{d}{2k} +
\binom{d}{2}$  conventional inner products. Therefore, given $\vx_1, \dots, \vx_n \in
\reals^d$, and $\vw \in \reals^d$, we can form the following  $\pm 1$ matrix
with size $n \times (\binom{d}{2k} + \binom{d}{2})$
\begin{equation*}
  \begin{bmatrix}
    \sgn(\langle \vw, \vx_1 \odot \valpha_1 \rangle) & \dots &     \sgn(\langle \vw, \vx_1 \odot \valpha_{\binom{d}{2k}} \rangle) & \sgn(\langle \vw, \vx_1 \odot \vbeta_1 \rangle) & \dots & \sgn(\langle \vw, \vx_1 \odot \vbeta_{\binom{d}{2}} \rangle) \\
    \vdots & \vdots & \vdots & \vdots & \ddots & \vdots \\
    \sgn(\langle \vw, \vx_n \odot \valpha_1 \rangle) & \dots &     \sgn(\langle \vw, \vx_n \odot \valpha_{\binom{d}{2k}} \rangle) & \sgn(\langle \vw, \vx_n \odot \vbeta_1 \rangle) & \dots & \sgn(\langle \vw, \vx_n \odot \vbeta_{\binom{d}{2}} \rangle) \\
  \end{bmatrix}
\end{equation*}
Recall that the growth function $\Pi_{\mT_{d,k}}(n)$ is the maximum over
$(\vx_i: i \in [n])$ of  the number of possible configurations of $(\sgn(\langle \vw,
\vx_i \rangle): i \in [n])$ when $\vw$ ranges in $\reals^d$. Note that due to Lemma~\ref{lem:informal-trp-sign-code}, $(\sgn(\langle \vw,
\vx_i \rangle): i \in [n])$ is determined by knowing the above matrix, and
hence the growth function is bounded by the maximum over $(\vx_i:i \in [n])$ of
the total number of possible configurations of the above matrix when $\vw$
ranges in $\reals^d$. Since each entry in the above matrix is in the form of
a conventional inner
product, and the VC dimension of the conventional inner product is equal to $d$
(see, for instance \cite{shalev2014understanding}), the number of possible
configurations of the above matrix is bounded by a polynomial in the number of
entries in the matrix, which is proportional to $n$ (note that the number of
columns is $\binom{d}{2k} + \binom{d}{2}$ which does not scale with $n$).
This means,

\begin{prop}[Proposition~\ref{prop:tip-growth-bound} informal]
  \label{prop:informal-tip-growth}
  The growth function $\Pi_{\mT_{d,k}}(n)$ is bounded by a degree $d$ polynomial in $n$,
  whose  coefficients depend on $d$ and $k$.
\end{prop}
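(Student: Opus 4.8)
The plan is to convert Lemma~\ref{lem:informal-trp-sign-code} into a counting argument: even though $\langle\vw,\vx\rangle_k$ is nonlinear and order-dependent, the entire $\pm 1$ pattern it produces on $n$ fixed samples is a function of a bounded number of ordinary halfspace evaluations that all share the \emph{same} weight vector $\vw$. Since homogeneous halfspaces in $\reals^d$ have VC dimension $d$, the Sauer--Shelah lemma then delivers a degree-$d$ polynomial bound on $\Pi_{\mT_{d,k}}(n)$.

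In more detail, I would first fix an arbitrary tuple $\vx_1,\dots,\vx_n\in\reals^d$, let $\valpha_1,\dots,\valpha_{\binom{d}{2k}}$ and $\vbeta_1,\dots,\vbeta_{\binom{d}{2}}$ be the indicator and pair vectors of Lemma~\ref{lem:informal-trp-sign-code}, and set $N:=\binom{d}{2k}+\binom{d}{2}$. Then I would form the finite set of at most $nN$ ``derived'' points
\begin{equation*}
  \mathcal{P} := \{\vx_i\odot\valpha_\ell : i\in[n],\ 1\le \ell\le \tbinom{d}{2k}\}\ \cup\ \{\vx_i\odot\vbeta_j : i\in[n],\ 1\le j\le \tbinom{d}{2}\}.
\end{equation*}
By Lemma~\ref{lem:informal-trp-sign-code}, for every $\vw$ the tuple $(\sgn(\langle\vw,\vx_i\rangle_k))_{i\in[n]}$ is determined by the restriction to $\mathcal{P}$ of the linear sign map $\vv\mapsto\sgn(\langle\vw,\vv\rangle)$; hence the number of distinct tuples $(T_{\vw,k}(\vx_i))_{i\in[n]}$ realized as $\vw$ ranges over $\reals^d$ is at most the number of sign patterns that the homogeneous-halfspace class on $\reals^d$ can realize on the $\le nN$ points of $\mathcal{P}$ (repeated or zero derived points only make this smaller, so bounding $|\mathcal{P}|$ by $nN$ is harmless).

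The next step is the classical estimate: the halfspace class (with the paper's fixed convention $\sgn(0)=+1$) has VC dimension $d$, so by Sauer--Shelah its growth function satisfies
\begin{equation*}
  \sum_{i=0}^{d}\binom{nN}{i}\ \le\ \left(\frac{e\,nN}{d}\right)^{d}
\end{equation*}
whenever $nN\ge d$ (and is $\le 2^{nN}$ otherwise). Maximizing over $\vx_1,\dots,\vx_n$ then gives $\Pi_{\mT_{d,k}}(n)\le\left(en(\binom{d}{2k}+\binom{d}{2})/d\right)^{d}$, which is a polynomial in $n$ of degree $d$ with coefficients depending only on $d$ and $k$ — the content of Proposition~\ref{prop:tip-growth-bound}.

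I expect the only genuinely subtle point to be the first reduction. The naive move of bounding each of the $N$ ``columns'' of the sign matrix separately and multiplying is both wasteful and actually unsound as a counting argument, because the columns are tightly coupled through the shared $\vw$; the fix is to apply the single halfspace class to \emph{all} $nN$ derived points simultaneously, which is exactly what pins the exponent at $d$ rather than something growing with $N$. Here the coding Lemma~\ref{lem:informal-trp-sign-code} is doing the real work: it trades the nonlinear truncation for linear functionals of $\vw$, after which classical VC theory applies off the shelf. A minor bookkeeping check is that the tie-breaking rule in $\sgn$ does not raise the VC dimension of the halfspace class above $d$.
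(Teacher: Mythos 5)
Your proposal is correct and follows essentially the same route as the paper: the paper's proof of Proposition~\ref{prop:tip-growth-bound} likewise forms the $n\times\left(\binom{d}{2k}+\binom{d}{2}\right)$ sign matrix of ordinary inner products $\sgn(\langle \vw, \vx_i\odot\valpha_\ell\rangle)$ and $\sgn(\langle \vw, \vx_i\odot\vbeta_j\rangle)$, invokes Lemma~\ref{lem:trp-sign-code} to reduce the truncated-inner-product pattern to this matrix, and applies Sauer--Shelah to the single homogeneous halfspace class on all $nm$ derived points to get $(enm/d)^d$. Your emphasis on counting sign patterns over all derived points jointly (rather than column by column) is exactly the step the paper relies on.
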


\subsection{Bounds on $\Pi_{\tmT_{d,k}}(n)$}
\label{sec:main-results-Pi-tT-bound-max}

Now, we extend the ideas from Section~\ref{sec:main-results-Pi-T-bound} to
bring the maximization over the $\ell_0$ ball into play and bound the growth
function of the function class $\tmT_{d,k}$. Observe that given a function
$\tT_{\vw,k}(.) \in \tmT_{d,k}$, we may write
\begin{equation}
  \label{eq:tT-max-1-I1-I2-1}
  \begin{aligned}
    \tT_{\vw, k}(\vx, y) &= \one{\exists \vxp \in \mB_0(\vx, k): y \neq \sgn(\langle \vw, \vxp \rangle_k)} \\
    &= \one{\sgn(\langle \vw, \vx \rangle_k) \neq y} \vee \one{\exists \vxp \in \mB_0(\vx, k): \sgn(\langle \vw, \vxp \rangle_k) \neq \sgn(\langle \vw, \vx \rangle_k)},
  \end{aligned}
\end{equation}
where $\vee$ denotes the logical OR. The first term is very similar to what we
discussed in Section~\ref{sec:main-results-Pi-T-bound}. Let us focus on the
second term, which we denote by $I_1(\vw, \vx)$. Equivalently, we may write
\begin{equation*}
  I_1(\vw, \vx) = \one{\sgn\left(\inf_{\vxp \in \mB_0(\vx, k)} \langle \vw, \vxp \rangle_k\right) \neq \sgn\left(\sup_{\vxp \in \mB_0(\vx, k)}\langle \vw, \vxp \rangle_k\right)},
\end{equation*}
where we let $\sgn(\infty) := +1$ and $\sgn(-\infty) := -1$. This motivates
studying the maximum and minimum values of the truncated inner product over the
$\ell_0$ ball. It is useful to define a
notation for this purpose. Given a vector $\vu \in \reals^d$, the truncated sum
of $\vu$ is defined as
\begin{equation}
  \label{eq:tsum-def}
  \tsum_k(\vu):= \sum_{i=k+1}^{d-k} u_{(i)}.
\end{equation}
Recall that $u_{(i)}$ denotes the $i$th smallest value in $\vu$. Observe that
$\langle \vw, \vx \rangle_k = \tsum_k(\vw \odot \vx)$. On the other hand, we
have
\begin{equation*}
  \{\vw \odot \vxp: \vxp \in \mB_0(\vx, k)\} \subset \mB_0(\vw \odot \vx, k).
\end{equation*}
Note that  if $\vw$ has some zero coordinates, the inclusion is strict, since
$\vw \odot \vxp$ is always zero in the zero coordinates  of $\vw$. However, if $w_i
\neq 0$ for all $i \in [d]$, the two sets are in fact equal. This means that if
$w_i \neq 0$ for all $i \in [d]$, with $\vu := \vw \odot \vx$, we have
\begin{equation}
\label{eq:I1-sgn-inf-sup}
  I_1(\vw, \vx) = \one{\sgn\left(\inf_{\vup \in \mB_0(\vu, k)} \tsum_k(\vup)\right) \neq \sgn\left(\sup_{\vup \in \mB_0(\vu, k)} \tsum_k(\vup)\right)}.
\end{equation}
It turns out that the maximum and minimum of the truncated sum can be
explicitly found, as is stated in the following Lemma~\ref{lem:tsum-l0-min-max},
whose proof is given in Appendix~\ref{app:lem-tsum-l0-min-max-proof}.

\begin{lem}
  \label{lem:tsum-l0-min-max}
  For $\vu \in \reals^d$, we have 
    \begin{align*}
      \min\{\tsum_k(\vup): \vup \in \mB_0(\vu, k)\} &= u_{(1)} + \dots + u_{(d-2k)} \\
      \max\{\tsum_k(\vup): \vup \in \mB_0(\vu, k)\} &= u_{(2k+1)} + \dots + u_{(d)}.
    \end{align*}    
\end{lem}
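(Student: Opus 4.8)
My plan is to prove the formula for the maximum; the one for the minimum then follows by applying it to $-\vu$, since $\tsum_k(-\vu) = -\tsum_k(\vu)$ (because $(-\vu)_{(i)} = -u_{(d+1-i)}$) and $\mB_0(-\vu,k) = -\mB_0(\vu,k)$, so that $\min\{\tsum_k(\vup):\vup\in\mB_0(\vu,k)\} = -\max\{\tsum_k(\vup):\vup\in\mB_0(-\vu,k)\}$. For the maximum I would prove the two inequalities separately. (The case $k=0$ is trivial since then $\mB_0(\vu,0)=\{\vu\}$, so I assume $1\le k<d/2$.)

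The upper bound rests on the following elementary fact about truncated sums, which I would isolate and prove first: \emph{for any $\vv\in\reals^d$ and any index set $S\subseteq[d]$ with $|S|=d-k$, the quantity $\tsum_k(\vv)$ is at most the sum of the $d-2k$ largest coordinates of $\vv$ among the indices in $S$.} Granting this, the upper bound is immediate: given $\vup\in\mB_0(\vu,k)$, let $J:=\{i: u'_i\neq u_i\}$, so $|J|\le k$, and choose any $S\subseteq[d]\setminus J$ with $|S|=d-k$. Since $u'_i=u_i$ for $i\in S$, applying the fact to $\vup$ and this $S$ gives
\[
  \tsum_k(\vup)\ \le\ \big(\text{sum of the $d-2k$ largest of }\{u_i:i\in S\}\big)\ \le\ u_{(2k+1)}+\dots+u_{(d)},
\]
the last step because the $d-2k$ largest values drawn from a sub-multiset of the coordinates of $\vu$ cannot exceed the $d-2k$ largest coordinates of $\vu$ overall.

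To prove the fact, let $R\subseteq S$ index the $d-2k$ largest coordinates of $\vv$ restricted to $S$ and set $S_0:=S\setminus R$, so $|S_0|=k$ and every coordinate indexed by $S_0$ is $\le$ every coordinate indexed by $R$. Writing the coordinates in $R$ in sorted order as $r_{(1)}\le\dots\le r_{(d-2k)}$, the crux is the rank comparison $r_{(j)}\ge v_{(j+k)}$ for every $j\in\{1,\dots,d-2k\}$: if $r_{(j)}<v_{(j+k)}$, then the $j$ coordinates realizing $r_{(1)},\dots,r_{(j)}$ together with the $k$ coordinates indexed by $S_0$ would be $j+k$ distinct coordinates of $\vv$, each strictly smaller than $v_{(j+k)}$, which is impossible since $\vv$ has at most $j+k-1$ coordinates below its $(j+k)$-th smallest value. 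Summing $r_{(j)}\ge v_{(j+k)}$ over $j$ yields that the sum of the coordinates indexed by $R$ is $\ge\sum_{j=1}^{d-2k}v_{(j+k)}=\sum_{i=k+1}^{d-k}v_{(i)}=\tsum_k(\vv)$, which is the fact. This rank-comparison step is the only real content of the lemma; everything else is bookkeeping, so I expect it to be the main obstacle, with the $-\vu$ symmetry and the achievability construction below being routine.

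For the matching lower bound I would exhibit an explicit maximizer. Let $A\subseteq[d]$ be a set of $k$ indices attaining the $k$ smallest coordinates of $\vu$ (so $\max_{i\in A}u_i\le\min_{j\notin A}u_j$), and let $\vup$ agree with $\vu$ off $A$ while $u'_i:=M$ for $i\in A$, where $M$ is strictly larger than every coordinate of $\vu$. Then $\vup\in\mB_0(\vu,k)$, the multiset of coordinates of $\vup$ is $\{u_{(k+1)},\dots,u_{(d)}\}$ together with $k$ copies of $M$, and after sorting its middle $d-2k$ entries are exactly $u_{(2k+1)},\dots,u_{(d)}$; hence $\tsum_k(\vup)=u_{(2k+1)}+\dots+u_{(d)}$. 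Combined with the upper bound this proves the identity for the maximum, and the minimum follows by the symmetry noted at the outset.
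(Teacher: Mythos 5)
Your proof is correct, and it is organized differently from the paper's. The paper's proof rests on a two-vector rank-shift lemma (its Lemma~\ref{lem:up-k+1-u-i}): if $\snorm{\vu-\vup}_0\le k$ then $u'_{(k+i)}\ge u_{(i)}$ for all $1\le i\le d-k$, proved via the characterization $u_{(i)}=\min\{\alpha: |\{j: u_j\le\alpha\}|\ge i\}$; applying it once to $(\vu,\vup)$ and once with the roles swapped yields the lower and upper bounds directly, and achievability is shown by essentially the same explicit perturbations you use. Your argument instead isolates a single-vector fact — $\tsum_k(\vv)$ is dominated by the sum of the $d-2k$ largest coordinates of $\vv$ within any $(d-k)$-element index set — proved by the rank comparison $r_{(j)}\ge v_{(j+k)}$, and then transfers to $\vu$ by choosing the index set to avoid the (at most $k$) perturbed coordinates; the minimum is then obtained for free from the $\vu\mapsto-\vu$ symmetry. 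The combinatorial core (a pigeonhole count of how many coordinates can lie strictly below a given order statistic) is the same in both proofs, but your decomposition trades the paper's symmetric two-sided application of one lemma for a one-sided argument plus negation, and your intermediate fact is a somewhat more self-contained statement about truncated sums versus restrictions. One small presentational gap: you should note (as you implicitly do) that with ties the set $R$ of the $d-2k$ ``largest'' indices in $S$ must be chosen consistently with a sorted order so that every $S_0$-coordinate is $\le$ every $R$-coordinate, but this is routine.
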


\definecolor{mygreen}{RGB}{62,188,150}

\begin{figure}
  \centering
  \scalebox{0.8}{
  \begin{tikzpicture}
    \begin{scope}[xshift=-3cm]
    \begin{scope}
      \fill[blue!30] (0.25,-0.5) rectangle (1.25,0.5);
      \fill[orange!50,rounded corners] (1.25,-0.5) rectangle (2.25,0.5);
      \fill[orange!50] (1.25,-0.5) rectangle (2,0.5);
      \draw[rounded corners, thick] (-2.25,-0.5) rectangle (2.25,0.5);
      \draw (0.25,-0.5) -- (0.25,0.5);
      \draw (1.25,-0.5) -- (1.25,0.5);
      \node[scale=0.7] at (-2.25,0.7) {$u_{(1)}$};
      \node[scale=0.7] at (0.25,0.7) {$u_{(d-2k)}$};
      \node[scale=0.7] at (1.25,0.7) {$u_{(d-k)}$};
      \node[scale=0.7] at (2.25,0.7) {$u_{(d)}$};
      \coordinate (lt) at (1.75,-0.5);
    \end{scope}

    \begin{scope}[yshift=-1.75cm]
      \fill[blue!30,rounded corners] (0.25,-0.5) rectangle (1.25,0.5);
      \fill[blue!30] (0.25,-0.5) rectangle (1,0.5);
      \draw[rounded corners, thick] (-2.25,-0.5) rectangle (1.25,0.5);
      \draw (0.25,-0.5) -- (0.25,0.5);
      \node[scale=0.7] at (-2.25,0.7) {$u_{(1)}$};
      \node[scale=0.7] at (0.25,0.7) {$u_{(d-2k)}$};
      \node[scale=0.7] at (1.25,0.7) {$u_{(d-k)}$};
      \fill[mygreen!60,rounded corners] (-3.25,-0.5) rectangle (-2.25,0.5);
      \draw[rounded corners, thick] (-3.25,-0.5) rectangle (-2.25,0.5);
      \coordinate (lb) at (-2.75,0.5);
    \end{scope}
    \draw[gray,thick,rounded corners,->] ($(lt)+(0,-0.05)$) -- ($(lt) + (0,-0.3)$) -- ($(lb)+(0,0.45)$) -- ($(lb)+(0,0.05)$);
    \node at (0,-2.7) {$(a)$};
    \end{scope}

    \begin{scope}[xshift=3cm]
    \begin{scope}
      \fill[blue!30] (-0.25,-0.5) rectangle (-1.25,0.5);
      \fill[orange!50,rounded corners] (-1.25,-0.5) rectangle (-2.25,0.5);
      \fill[orange!50] (-1.25,-0.5) rectangle (-2,0.5);
      \draw[rounded corners, thick] (-2.25,-0.5) rectangle (2.25,0.5);
      \draw (-0.25,-0.5) -- (-0.25,0.5);
      \draw (-1.25,-0.5) -- (-1.25,0.5);
      \node[scale=0.7] at (-2.25,0.7) {$u_{(1)}$};
      \node[scale=0.7] at (-0.25,0.7) {$u_{(2k)}$};
      \node[scale=0.7] at (-1.25,0.7) {$u_{(k)}$};
      \node[scale=0.7] at (2.25,0.7) {$u_{(d)}$};
      \coordinate (rt) at (-1.75,-0.5);
    \end{scope}

    \begin{scope}[yshift=-1.75cm]
      \fill[blue!30,rounded corners] (-0.25,-0.5) rectangle (-1.25,0.5);
      \fill[blue!30] (-0.25,-0.5) rectangle (-1,0.5);
      \draw[rounded corners, thick] (-1.25,-0.5) rectangle (2.25,0.5);
      \draw (-0.25,-0.5) -- (-0.25,0.5);
      \node[scale=0.7] at (-0.25,0.7) {$u_{(2k)}$};
      \node[scale=0.7] at (-1.25,0.7) {$u_{(k)}$};
      \node[scale=0.7] at (2.25,0.7) {$u_{(d)}$};
      \fill[mygreen!60,rounded corners] (3.25,-0.5) rectangle (2.25,0.5);
      \draw[rounded corners, thick] (3.25,-0.5) rectangle (2.25,0.5);
      \coordinate (rb) at (2.75,0.5);
    \end{scope}
    \draw[gray,thick,rounded corners,->] ($(rt)+(0,-0.05)$) -- ($(rt) + (0,-0.3)$) -- ($(rb)+(0,0.45)$) -- ($(rb)+(0,0.05)$);
    \node at (0,-2.7) {$(b)$};
    \end{scope}

  \end{tikzpicture}
  }
  \vspace{-2mm}
  \caption{$(a)$ Sorted elements in $\vu$ are illustrated on top, and $\vup \in
    \mB_0(\vu,k)$ on the bottom. To minimize $\tsum_k(\vup)$, we need to make the top
    $k$ elements in $\vu$ 
    (orange
    block) smaller than $u_{(1)}$ (green block). After truncating the
    green and blue blocks in $\vup$, we get $\tsum_k(\vup) =
    u_{(1)} + \dots + u_{(d-2k)}$. $(b)$ similarly, $u_{(2k+1)} +
    \dots + u_{(d)}$ is the maximum.}
  \label{fig:tsum-min-max}
\end{figure}
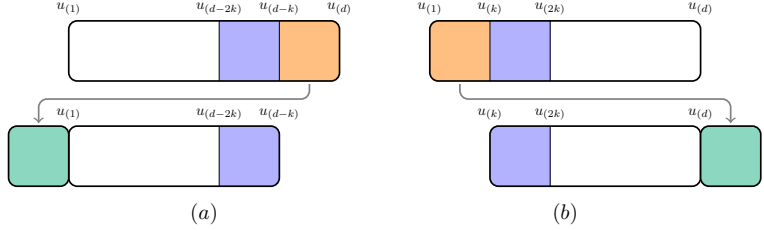


To have an intuitive understanding of this lemma,  note that to make
$\tsum_k(\vup)$ as small as possible, the best strategy is to make the top $k$
coordinates in $\vu$ arbitrarily small (say $-\infty$). After
truncation, this gives $u_{(1)} + \dots + u_{(d-2k)}$ (see Fig.~\ref{fig:tsum-min-max}). The
maximum term can be justified similarly.
Using Lemma~\ref{lem:tsum-l0-min-max} in~\eqref{eq:I1-sgn-inf-sup}, we
realize that if $w_i \neq 0$ for all $i \in [d]$, 
  \begin{equation*}
    I_1(\vw, \vx) =    \one{\sgn(u_{(1)} + \dots + u_{(d-2k)}) \neq \sgn(u_{(2k+1)} + \dots u_{(d)})},
  \end{equation*}
where $\vu := \vw \odot \vx$. Using this
in~\eqref{eq:tT-max-1-I1-I2-1},  if $w_i \neq 0$ for all $i \in
[d]$, we have
\begin{equation*}
  \tT_{\vw, k}(\vx, y) = \one{\sgn(\tsum_k(\vu)) \neq y} \vee \one{\sgn\left(\sum_{i=1}^{d-2k} u_{(i)}\right) \neq \sgn\left(\sum_{i=2k+1}^{d} u_{(i)}\right)},
\end{equation*}
where $\vu = \vw \odot \vx$. Note that all the three terms $\sgn(\tsum_k(\vu))$,
$\sgn(\sum_{i=1}^{d-2k} u_{(i)})$, and $\sgn(\sum_{i=2k+1}^{d} u_{(i)})$ that appear in the
above expression are in the form of the sign of the summation of some $d-2k$
coordinates in $\vu = \vw \odot \vx$ after sorting the elements in $\vu$.
Therefore, the same argument of Lemma~\ref{lem:informal-trp-sign-code} can be
used to show that all of these three terms are determined by knowing 
$\sgn(\langle \vw, \vx \odot \valpha_i \rangle)$ for $1
  \leq i \leq \binom{d}{2k}$, and $\sgn(\langle \vw, \vx \odot \vbeta_j
  \rangle)$ for $1 \leq j \leq \binom{d}{2}$.
Consequently, a similar coding technique as the one in
Section~\ref{sec:main-results-Pi-T-bound} can be used to show that the number of
configurations of $(\tT_{\vw,k}(\vx_i, y_i): i \in [n])$ as $\vw$ ranges over
all the vectors with nonzero coordinates is polynomial in $n$. The case where
$\vw$ contains some zero coordinates requires some technical work and is
addressed in Section~\ref{sec:formal-analysis}. To summarize, 

\begin{prop}[Proposition~\ref{prop:max-l0-growth-bound} informal]
  \label{prop:informal-tT-max-growth}
    The growth function $\Pi_{\tmT_{d,k}}(n)$ is bounded by a degree $d$ polynomial in $n$,
  whose  coefficients depend on $d$ and $k$.
\end{prop}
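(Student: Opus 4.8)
The plan is to extend the coding argument behind Lemma~\ref{lem:informal-trp-sign-code} so that it simultaneously absorbs the maximization over the $\ell_0$ ball, and then to feed the outcome into the same Sauer--Shelah estimate that proves Proposition~\ref{prop:informal-tip-growth}. I would fix $n$ points $\vz_i=(\vx_i,y_i)\in\mZ$ and first consider only weight vectors $\vw\in\reals^d$ all of whose coordinates are nonzero. For such $\vw$ one has the equality $\{\vw\odot\vxp:\vxp\in\mB_0(\vx,k)\}=\mB_0(\vw\odot\vx,k)$, so the decomposition~\eqref{eq:tT-max-1-I1-I2-1} combined with Lemma~\ref{lem:tsum-l0-min-max} gives, with $\vu:=\vw\odot\vx$,
\begin{equation*}
  \tT_{\vw,k}(\vx,y)=\one{\sgn(\tsum_k(\vu))\neq y}\ \vee\ \one{\sgn\!\left(\textstyle\sum_{i=1}^{d-2k}u_{(i)}\right)\neq\sgn\!\left(\textstyle\sum_{i=2k+1}^{d}u_{(i)}\right)},
\end{equation*}
so that $\tT_{\vw,k}(\vx,y)$ is a fixed Boolean function of the label $y$ and of the three signs $\sgn(\tsum_k(\vu))$, $\sgn(\sum_{i=1}^{d-2k}u_{(i)})$, $\sgn(\sum_{i=2k+1}^{d}u_{(i)})$.

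The observation I would then use is that each of these three quantities is the sign of a sum of exactly $d-2k$ entries of $\vu=\vw\odot\vx$, the entries being picked out by the sorted order of $\vu$; hence the argument of Lemma~\ref{lem:informal-trp-sign-code} applies verbatim and shows that all three are determined by the signs of the $\binom{d}{2k}+\binom{d}{2}$ conventional inner products $\langle\vw,\vx\odot\valpha_i\rangle$ and $\langle\vw,\vx\odot\vbeta_j\rangle$ --- the $\vbeta_j$-signs fix the ordering of $\vu$, hence which $d-2k$ coordinates enter each sum, and the $\valpha_i$-signs then supply the sign of the corresponding sum. Consequently, forming for each $\vw$ the $n\times(\binom{d}{2k}+\binom{d}{2})$ matrix of these signs, the pattern $(\tT_{\vw,k}(\vz_i))_{i\in[n]}$ is a function of that matrix together with the fixed labels, so the number of patterns produced by nonzero $\vw$ is at most the number of realizable matrices. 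Since each entry of the matrix, as a function of $\vw\in\reals^d$ on a fixed point, is a homogeneous linear threshold function --- a class of VC dimension $d$ (see, e.g., \cite{shalev2014understanding}) --- Sauer's lemma bounds the number of realizable matrices by $(eN/d)^d$ with $N:=n(\binom{d}{2k}+\binom{d}{2})$, which is a polynomial in $n$ of degree $d$ (the estimate being valid since $N\ge n>d+1>d$).

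The part I expect to be the main obstacle is the remaining case, $\vw$ with some zero coordinates, because there the inclusion $\{\vw\odot\vxp:\vxp\in\mB_0(\vx,k)\}\subseteq\mB_0(\vw\odot\vx,k)$ is strict and Lemma~\ref{lem:tsum-l0-min-max} no longer describes the range of $\tsum_k(\cdot)$ over $\mB_0(\vx,k)$. My plan here is a union bound over the support: for $S\subseteq[d]$ let $\mW_S$ be the set of $\vw$ with $\{i:w_i=0\}=S$; when $\vw\in\mW_S$ the product $\vw\odot\vxp$ is supported on $[d]\setminus S$ for every $\vxp$, so the adversary effectively acts only on the $d-|S|$ coordinates outside $S$ and $\tT_{\vw,k}$ restricted to $\mW_S$ is an instance of the same problem in dimension $d-|S|$ (requiring only a bounded-in-$n$ number of extra conventional inner products to locate the $|S|$ forced zeros of $\vw\odot\vx$ among the sorted nonzero entries). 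Re-running the two steps above in that lower dimension bounds the contribution of $\mW_S$ by a polynomial in $n$ of degree at most $d$, and summing over the at most $2^d$ sets $S$ --- a number independent of $n$ --- yields the claimed degree-$d$ polynomial bound on $\Pi_{\tmT_{d,k}}(n)$. Making the lower-dimensional reduction precise, i.e.\ recomputing the infimum and supremum of the truncated sum over $\mB_0(\vx,k)$ when $\vw$ has zeros, is the bookkeeping I would defer to Section~\ref{sec:formal-analysis}.
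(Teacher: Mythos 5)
Your treatment of the generic case coincides with the paper's: the decomposition~\eqref{eq:tT-max-1-I1-I2-1}, the reduction via Lemma~\ref{lem:tsum-l0-min-max} to three signs of sums of $d-2k$ sorted entries of $\vw\odot\vx$, the re-use of the coding argument of Lemma~\ref{lem:informal-trp-sign-code}, and the Sauer--Shelah count on the $n\times\bigl(\binom{d}{2k}+\binom{d}{2}\bigr)$ sign matrix are all exactly what the paper does. The gap is in the case you yourself flag as the main obstacle and then defer: weight vectors with zero coordinates. This is not bookkeeping --- it is where the technical substance of the paper's proof lives. When $\vw$ has zeros, the reachable set $\{\vw\odot\vxp:\vxp\in\mB_0(\vx,k)\}$ is a strict subset of $\mB_0(\vw\odot\vx,k)$, so the infimum and supremum of $\tsum_k$ over it can genuinely exceed/fall short of the values $u_{(1)}+\dots+u_{(d-2k)}$ and $u_{(2k+1)}+\dots+u_{(d)}$; the paper's Lemma~\ref{lem:inf-sup-w-x-k-equality-cond} only gives one-sided bounds in general, and Lemma~\ref{lem:Aw-large-k-I1-I2} needs a four-case analysis (including an explicit construction of an adversarial $\vxp$ in one subcase) to show that the \emph{indicator} $I_1$ nevertheless equals the closed-form $I_2$ whenever $\vw$ has more than $k$ nonzero coordinates. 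Your proposal contains no substitute for this argument.

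Your fallback plan --- stratifying by the support $S$ of $\vw$ and treating each stratum as "the same problem in dimension $d-|S|$" --- is also not correct as stated: the truncation $\tsum_k$ still removes the top and bottom $k$ entries of the full $d$-dimensional vector $\vw\odot\vxp$, and the $|S|$ forced zeros participate in that sort, so the stratum is not a clean lower-dimensional instance; whether a forced zero is truncated or survives depends on the signs of the remaining entries and on the adversary's choice, which is exactly the interaction the paper's case analysis untangles. The paper's resolution is cleaner than a $2^d$-fold union bound: it splits only into $|\{i:w_i\neq 0\}|>k$ (where $I_1=I_2$ and the coding argument applies unchanged) and $|\{i:w_i\neq 0\}|\leq k$ (where by Lemma~\ref{lem:vw-at-most-k-nonzero-trp-zero} the truncated inner product is identically zero, so this entire regime contributes a single additional pattern $(\one{y_i=-1})_{i\in[n]}$), yielding the bound $1+\bigl(en(\binom{d}{2k}+\binom{d}{2})/d\bigr)^d$. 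To complete your proof you would need to either carry out the deferred inf/sup computation for each support stratum or, more economically, adopt the paper's two-regime split and prove the analogue of Lemma~\ref{lem:Aw-large-k-I1-I2}.
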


Using the bound of Proposition~\ref{prop:informal-tT-max-growth} (whose exact
form is given in Proposition~\ref{prop:max-l0-growth-bound})
in~\eqref{eq:rad-growth-bound}, we get an upper bound of order $\sqrt{\log n
  /n}$ for $R_n(\tmT_{d,k})$, which yields the bound of our
Theorem~\ref{thm:main-result}. The formal analysis of the proof steps are provided in Section~\ref{sec:formal-analysis}.


\section{Formal Analysis}
\label{sec:formal-analysis}

In this section, we cover the details of the proof steps in
Section~\ref{sec:main-results}. The proof of Theorem~\ref{thm:main-result} which is
given in Appendix~\ref{app:main-thm-proof}, follows directly 
from Proposition~\ref{prop:max-l0-growth-bound}, the bound
in~\eqref{eq:rad-growth-bound}, and \cite[Theorem~26.5]{shalev2014understanding}.

\subsection{A Growth Bound for   Truncated Inner
  Products}
\label{sec:formal-tip-bound}

In this section, we state and prove the formal version of
Proposition~\ref{prop:informal-tip-growth}, which is
Proposition~\ref{prop:tip-growth-bound}.
\begin{prop}
  \label{prop:tip-growth-bound} For $n
  > d+1$, we
  have
  \begin{equation}
    \label{eq:tip-growth-bound}
    \Pi_{\mT_{d,k}}(n) \leq \left( \frac{e n \left( \binom{d}{2k} + \binom{d}{2} \right)}{d} \right)^{d}.
  \end{equation}
\end{prop}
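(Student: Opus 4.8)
The plan is to reduce the growth function of $\mT_{d,k}$ to that of conventional homogeneous linear classifiers via the coding of Lemma~\ref{lem:trp-sign-code}, and then to finish with the Sauer--Shelah lemma together with the fact that the VC dimension of that latter class is $d$. First I would fix an arbitrary configuration of points $\vx_1, \dots, \vx_n \in \reals^d$; the task is to bound the number of distinct tuples $(\sgn(\langle \vw, \vx_1 \rangle_k), \dots, \sgn(\langle \vw, \vx_n \rangle_k))$ as $\vw$ ranges over $\reals^d$. For each $i \in [n]$, introduce the auxiliary vectors $\vx_i \odot \valpha_1, \dots, \vx_i \odot \valpha_{\binom{d}{2k}}$ and $\vx_i \odot \vbeta_1, \dots, \vx_i \odot \vbeta_{\binom{d}{2}}$, where $\valpha_j$ runs over the indicators of the $\binom{d}{2k}$ subsets of $[d]$ of size $d-2k$ and $\vbeta_j$ runs over the $\binom{d}{2}$ pair vectors described before Lemma~\ref{lem:informal-trp-sign-code}. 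Altogether this is a family of $N := n\bigl(\binom{d}{2k} + \binom{d}{2}\bigr)$ points in $\reals^d$.

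By Lemma~\ref{lem:trp-sign-code}, for every $\vw$ the value $\sgn(\langle \vw, \vx_i \rangle_k)$ is a fixed function (depending on $i$ and $\vx_i$ but not on $\vw$) of the sign pattern $\bigl(\sgn(\langle \vw, \vx_i \odot \valpha_j \rangle)\bigr)_j$ and $\bigl(\sgn(\langle \vw, \vx_i \odot \vbeta_j \rangle)\bigr)_j$. Consequently the map $\vw \mapsto (\sgn(\langle \vw, \vx_i \rangle_k))_{i\in[n]}$ factors through the map sending $\vw$ to the full $n \times \bigl(\binom{d}{2k}+\binom{d}{2}\bigr)$ matrix of signs of conventional inner products; hence the number of distinct tuples we want to count is at most the number of distinct sign patterns realized by the homogeneous linear classifier $\vz \mapsto \sgn(\langle \vw, \vz \rangle)$ on the $N$ auxiliary points. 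I would then invoke the two standard facts: the class $\{\vz \mapsto \sgn(\langle \vw, \vz \rangle) : \vw \in \reals^d\}$ has VC dimension $d$, and by Sauer--Shelah its growth function on $N$ points is at most $\sum_{i=0}^{d}\binom{N}{i} \le (eN/d)^d$ whenever $N \ge d$. Since $0<k<d/2$ forces $d\ge 3$ and $\binom{d}{2k}+\binom{d}{2}\ge 1$, we have $N \ge n > d+1 > d$, so the bound applies; substituting $N = n\bigl(\binom{d}{2k}+\binom{d}{2}\bigr)$ gives $\bigl(en(\binom{d}{2k}+\binom{d}{2})/d\bigr)^d$, and taking the maximum over configurations $\vx_1,\dots,\vx_n$ yields exactly~\eqref{eq:tip-growth-bound}.

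I do not expect a real obstacle in the proposition itself, since the substance has been isolated into Lemma~\ref{lem:trp-sign-code}. The only points requiring care here are: (i) the coding map $\vw \mapsto (\text{sign matrix})$ is not injective, which is harmless because we only need an upper bound and the output tuple factors through the matrix; (ii) the tie-breaking conventions, both in $\sgn$ (where $\sgn(0)=+1$) and in the ordering of the coordinates of $\vw\odot\vx$ when some $w_ix_i = w_jx_j$, must be exactly the ones used by Lemma~\ref{lem:trp-sign-code} to select the surviving coordinates; and (iii) verifying the regime $N \ge d$ so that the clean form $(eN/d)^d$ of Sauer--Shelah is legitimate. The genuinely delicate ingredient — deciding which coordinates survive truncation purely from the $\binom{d}{2}$ pairwise sign comparisons and then reading off the sign of the truncated inner product from the appropriate $\valpha$ — is precisely the content of Lemma~\ref{lem:trp-sign-code}, which is where the non-linearity of $\langle\cdot,\cdot\rangle_k$ is actually handled.
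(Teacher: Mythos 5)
Your proposal is correct and follows essentially the same route as the paper's proof: it forms the same $n \times \bigl(\binom{d}{2k}+\binom{d}{2}\bigr)$ matrix of signs of conventional inner products, observes via Lemma~\ref{lem:trp-sign-code} that the tuple of truncated-inner-product signs factors through this matrix, and applies the Sauer--Shelah lemma with the VC dimension $d$ of homogeneous linear classifiers to get the bound $(enm/d)^d$. The extra care you note about the regime $N \ge d$ and tie-breaking conventions is sensible but does not change the argument.
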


First  we need the formal version of Lemma~\ref{lem:informal-trp-sign-code},
which is Lemma~\ref{lem:trp-sign-code} below.
Let $\mJ_1, \dots, \mJ_{\binom{d}{2k}}$ be  the subsets of size $d-2k$ of $[d]$
in an arbitrary but fixed order, and define $\valpha_i= (\alpha_{i,1}, \dots, \alpha_{i,d}), 1 \leq i \leq
\binom{d}{2k}$  such that
$\alpha_{i,j}$ is 1 if $j \in \mJ_i$ and zero otherwise.
Also, define  $\vbeta_i: 1 \leq i \leq
\binom{d}{2}$ such that for each of them we pick two coordinates in $[d]$, set the smaller coordinate to 1, and the large coordinate to $-1$ (recall Fig.~\ref{fig:coding}).
The proof of Lemma~\ref{lem:trp-sign-code} is given in Appendix~\ref{app:trp-sign-code-proof}.

\begin{lem}
\label{lem:trp-sign-code}
    Assume that $\vw, \vx \in \reals^d$ are given, and
    let $\vu := \vw \odot \vx$. Then,
  for any subset $\mJ \subset [d]$ with $|\mJ| = d-2k$, the value of
  $\sgn(\sum_{j \in \mJ} u_{(j)})$ can be determined by knowing the set of signs $(\sgn(\langle \vw, \vx \odot \valpha_i \rangle):
  1 \leq i \leq \binom{d}{2k})$ and the set of signs $(\sgn(\langle \vw, \vx
  \odot \vbeta_j \rangle): 1 \leq j \leq \binom{d}{2})$.
\end{lem}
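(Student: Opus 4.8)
The plan is to show that from the "code" consisting of the $\binom{d}{2k}$ signs $\sgn(\langle \vw, \vx \odot \valpha_i\rangle)$ and the $\binom{d}{2}$ signs $\sgn(\langle \vw, \vx \odot \vbeta_j\rangle)$, one can recover the sign of $\sum_{j \in \mJ} u_{(j)}$ for every fixed size-$(d-2k)$ subset $\mJ$ of $[d]$. The key observation is that $\langle \vw, \vx \odot \vbeta_j \rangle = u_a - u_b$ where $\{a,b\}$ is the pair defining $\vbeta_j$ and $a < b$ (so the sign of this quantity tells us whether $u_a \ge u_b$ or $u_a < u_b$). Thus the $\binom{d}{2}$ beta-signs encode, for every pair of indices, the comparison between $u_a$ and $u_b$, which is exactly the data needed to determine a total preorder on $\{u_1, \dots, u_d\}$ --- i.e., to know which index holds the $r$-th smallest value $u_{(r)}$ for each rank $r$ (ties broken by, say, index order, consistently with the convention $\sgn(0)\ge$ being treated appropriately). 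The mild technical point here is ties: when $u_a = u_b$ the beta-sign is $+1$ by the $\sgn(0)=+1$ convention, so I would fix a deterministic tie-breaking rule (smaller index is "smaller") and verify that the beta-signs determine the resulting tie-broken ranking.

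Once the ranking is known, I proceed as follows. For the fixed target subset $\mJ$ with $|\mJ| = d-2k$: from the ranking we identify the set $\mI \subseteq [d]$ of $d-2k$ indices whose values occupy ranks in $\mJ$, i.e., $\mI := \{\,\pi(r) : r \in \mJ\,\}$ where $\pi(r)$ is the index carrying the $r$-th smallest value. Then $\sum_{j \in \mJ} u_{(j)} = \sum_{i \in \mI} u_i = \langle \vw, \vx \odot \valpha_\ell\rangle$, where $\valpha_\ell$ is the indicator vector of $\mI$; since $\mI$ is a size-$(d-2k)$ subset of $[d]$, it equals $\mJ_\ell$ for exactly one $\ell \in \{1, \dots, \binom{d}{2k}\}$. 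Hence $\sgn(\sum_{j\in\mJ} u_{(j)}) = \sgn(\langle \vw, \vx \odot \valpha_\ell\rangle)$, which is one of the alpha-signs in the code. The crucial structural fact making this work is that the selection map "$\mJ \mapsto$ the index set $\mI$ whose ranks are $\mJ$" depends only on the ranking (hence only on the beta-signs) and not on $\vx$ or $\vw$ directly; the alpha-sign it points to then supplies the answer.

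**Writing out the argument.** First I would record the identities $\langle \vw, \vx \odot \valpha_i\rangle = \sum_{j \in \mJ_i} u_j$ and $\langle \vw, \vx \odot \vbeta_j\rangle = u_{a} - u_{b}$ (with $a<b$ the chosen pair), which are immediate from $\vu = \vw \odot \vx$ and the definitions of $\valpha_i, \vbeta_j$. Second, I would argue that the $\binom{d}{2}$ beta-signs determine a total preorder on the multiset $\{u_1, \dots, u_d\}$, and with a fixed tie-breaking convention determine a genuine permutation $\pi$ of $[d]$ with $u_{\pi(1)} \le \dots \le u_{\pi(d)}$; this is just the statement that knowing all pairwise comparisons determines a sorted order. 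Third, for the given $\mJ$, set $\mI = \pi(\mJ)$, note $\sum_{j \in \mJ} u_{(j)} = \sum_{i \in \mI} u_i$, locate the unique $\ell$ with $\mJ_\ell = \mI$, and conclude $\sgn(\sum_{j \in \mJ} u_{(j)}) = \sgn(\langle \vw, \vx \odot \valpha_\ell\rangle)$.

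**Main obstacle.** I expect the only real subtlety to be handling ties among the $u_i$'s carefully and consistently: when several coordinates of $\vu$ are equal, the rank assignment $\pi$ is not unique, but since $\sum_{j \in \mJ} u_{(j)}$ is a symmetric function of the equal values, any consistent tie-breaking rule yields the same value of $\sum_{i\in\mI} u_i$, so the conclusion is unaffected. Making this precise --- defining $\pi$ from the beta-signs under the $\sgn(0)=+1$ convention and checking it is well-defined and order-consistent --- is the one place where a short careful argument is needed; everything else is bookkeeping with the two index identities above.
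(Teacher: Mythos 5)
Your proposal is correct and follows essentially the same route as the paper's proof in Appendix~\ref{app:trp-sign-code-proof}: use the $\vbeta_j$-signs to recover an ordering $u_{i_1} \leq \dots \leq u_{i_d}$ of the coordinates of $\vu$, map the rank set $\mJ$ to the index set $\tilde{\mJ} = \{i_j : j \in \mJ\}$, and read off $\sgn(\sum_{j \in \mJ} u_{(j)})$ as the $\valpha$-sign indexed by $\tilde{\mJ}$. Your explicit handling of ties (noting that any consistent tie-breaking yields the same value of $\sum_{i \in \mI} u_i$ by symmetry) is a small point the paper leaves implicit, but otherwise the two arguments coincide.
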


Here, we give the proof idea of Proposition~\ref{prop:tip-growth-bound}, and
refer the reader to Appendix~\ref{app:prop-trp-growth-proof} for a detailed proof.
  Fix $\vx_1, \dots, \vx_n \in \reals^d$. Let $m := \binom{d}{2k} +
  \binom{d}{2}$. For $1 \leq j \leq \binom{d}{2k}$, let $\vx_{i,j} := \vx_i
  \odot \valpha_j$, and for $1+\binom{d}{2k} \leq j \leq m$, let $\vx_{i,j} :=
  \vx_i \odot \vbeta_{j - \binom{d}{2k}}$.
  For $\vw \in \reals^d$, we define the $n \times m$ matrix $M(\vw)$ with $\pm
  1$ entries as follows:
  \begin{equation*}
    M(\vw) :=
    \begin{bmatrix}
      \sgn(\langle \vw, \vx_{1,1} \rangle)  & \dots & \sgn(\langle \vw, \vx_{1,m} \rangle) \\
      \vdots & \ddots & \vdots \\
      \sgn(\langle \vw, \vx_{n,1} \rangle) & \dots & \sgn(\langle \vw, \vx_{n,m} \rangle)
    \end{bmatrix}
  \end{equation*}
Note that from Lemma~\ref{lem:trp-sign-code}, $T_{\vw, k}(\vx_i) = \sgn(\langle
\vw, \vx_i \rangle_k)$ for $i\in[n]$ are determined by knowing $M(\vw)$. But since the VC dimension for conventional inner product in
dimension $d$ is $d$, and there are $nm$ entries in the above matrix, using the
Sauer-Shelah lemma, the number of possible configurations of $M(\vw)$ as $\vw$
ranges over $\reals^d$  is at
most $(enm/d)^d$, which is the desired bound.

\subsection{A Growth Bound for  the Function Class $\tmT_{d, k}$}
\label{sec:formal-growth-tmT-dk-bound}

In this section, we state and discuss the proof steps of  the formal version of
Proposition~\ref{prop:informal-tT-max-growth}, which is
Proposition~\ref{prop:max-l0-growth-bound} below. The proof of
Proposition~\ref{prop:max-l0-growth-bound} is given in Appendix~\ref{app:prop-max-growth-proof}.

\begin{prop}
  \label{prop:max-l0-growth-bound}
  For $n > d+1$, we have
  \begin{equation*}
    \Pi_{\tmT_{d,k}}(n) \leq 1 + \left( \frac{e n \left( \binom{d}{2k} + \binom{d}{2} \right)}{d} \right)^{d}.
  \end{equation*}
\end{prop}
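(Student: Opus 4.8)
The plan is to reduce the growth function $\Pi_{\tmT_{d,k}}(n)$ to the matrix-configuration counting argument already used for Proposition~\ref{prop:tip-growth-bound}, handling separately the easy case of weight vectors with all nonzero coordinates and the technical case where some coordinates vanish. Fix arbitrary $\vz_1 = (\vx_1,y_1),\dots,\vz_n = (\vx_n,y_n) \in \mZ$. The discussion preceding Proposition~\ref{prop:informal-tT-max-growth} shows that whenever $w_i \neq 0$ for all $i \in [d]$, the value $\tT_{\vw,k}(\vx_i,y_i)$ is a fixed Boolean combination (an OR of three indicator comparisons) of the quantities $\sgn(\tsum_k(\vw \odot \vx_i))$, $\sgn(\sum_{j=1}^{d-2k} u_{(j)})$ and $\sgn(\sum_{j=2k+1}^{d} u_{(j)})$ with $\vu = \vw \odot \vx_i$, together with the bit $y_i$. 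Each of these three signs is the sign of a sum of $d-2k$ sorted coordinates of $\vw \odot \vx_i$, so by Lemma~\ref{lem:trp-sign-code} each is determined by the row of the $n \times m$ sign matrix $M(\vw)$ (with $m = \binom{d}{2k} + \binom{d}{2}$, columns indexed by the $\valpha_i$'s and $\vbeta_j$'s applied to $\vx_i$) corresponding to index $i$. Hence the whole pattern $(\tT_{\vw,k}(\vx_i,y_i): i \in [n])$ is a function of $M(\vw)$ alone, so the number of patterns realized by nonzero $\vw$ is at most the number of configurations of $M(\vw)$, which by the Sauer--Shelah argument (VC dimension $d$ for conventional inner products, $nm$ entries, $n > d+1$) is at most $\left( \frac{en(\binom{d}{2k}+\binom{d}{2})}{d} \right)^{d}$.

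Next I would address weight vectors $\vw$ with a nonempty zero-set $S := \{i : w_i = 0\}$. The point is that zeroing out coordinates of $\vw$ is equivalent to restricting to the subvector indexed by $\bar S := [d] \setminus S$: truncation still removes the top $k$ and bottom $k$ of $\vw \odot \vx$, but now all coordinates in $S$ contribute $0$. I would argue that $\tT_{\vw,k}(\vx,y)$ for such $\vw$ equals $\tT_{\vw|_{\bar S}, k'}$-type quantity on the lower-dimensional data, \emph{except} that the adversary's $\ell_0$ budget interacts with the zero coordinates. The cleanest route is probably to observe that the supremum/infimum computation of Lemma~\ref{lem:tsum-l0-min-max} applied to $\vu = \vw \odot \vx$ (which has at least $|S|$ forced zeros) still yields $\sup = u_{(2k+1)} + \dots + u_{(d)}$ and $\inf = u_{(1)} + \dots + u_{(d-2k)}$ as sums of sorted coordinates of $\vw \odot \vx$ — Lemma~\ref{lem:tsum-l0-min-max} holds for \emph{any} $\vu \in \reals^d$ — so the only place the nonzero assumption was used is in the set equality $\{\vw \odot \vxp : \vxp \in \mB_0(\vx,k)\} = \mB_0(\vw \odot \vx, k)$, which for general $\vw$ degrades to an inclusion $\subseteq$. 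With only the inclusion, $\sup_{\vxp} \langle \vw, \vxp\rangle_k \le \sup_{\vup \in \mB_0(\vw\odot\vx,k)} \tsum_k(\vup)$ and similarly for the infimum, so $I_1(\vw,\vx)$ is upper-bounded by the OR-of-signs expression rather than equal to it; but an upper bound on each $\tT$-value does not immediately bound the number of distinct patterns. I therefore expect the honest fix is to partition $\reals^d$ by the zero-pattern $S$ (finitely many choices, $2^d$ of them, independent of $n$) and, on each piece, re-run the coding lemma in the effective dimension $d - |S|$ with an adjusted truncation level, so that on each piece the number of patterns is polynomial in $n$ of degree at most $d$; summing over the finitely many pieces and absorbing constants into the already-present $\left( \frac{en(\binom{d}{2k}+\binom{d}{2})}{d}\right)^{d}$ term (the $\binom{d}{2k}+\binom{d}{2}$ factor dominating the lower-dimensional analogues) gives the claimed bound with the extra additive $1$ accounting for, e.g., the trivial all-zero weight or an empty function class.

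Finally I would assemble these: $\Pi_{\tmT_{d,k}}(n)$ is at most the number of patterns from all-nonzero $\vw$ plus the number from $\vw$ with a nonzero zero-set, and both are controlled by $\left( \frac{en(\binom{d}{2k}+\binom{d}{2})}{d}\right)^{d}$ up to the constant $1$, yielding $\Pi_{\tmT_{d,k}}(n) \le 1 + \left( \frac{en(\binom{d}{2k}+\binom{d}{2})}{d}\right)^{d}$ for $n > d+1$. The main obstacle is precisely the zero-coordinate case: the slick coding argument relies on the identity $\{\vw\odot\vxp:\vxp\in\mB_0(\vx,k)\} = \mB_0(\vw\odot\vx,k)$, which fails once $\vw$ has zeros, so one must either carefully re-derive $\sup/\inf$ of the truncated sum under the \emph{strict} inclusion (showing the extremal values are still attained and are still sums of sorted coordinates, which should hold because the adversary can always leave zero coordinates untouched and spend its budget on the nonzero ones) or stratify by zero-pattern and handle each stratum in reduced dimension. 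Everything else — Lemma~\ref{lem:trp-sign-code}, Lemma~\ref{lem:tsum-l0-min-max}, and the Sauer--Shelah counting — is already in place, so the proof is mostly bookkeeping once the zero-coordinate subtlety is pinned down.
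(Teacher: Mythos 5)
Your treatment of the all-nonzero case is exactly the paper's argument, and you correctly isolate the real obstacle: once $\vw$ has zeros, $\{\vw\odot\vxp:\vxp\in\mB_0(\vx,k)\}$ is only a strict subset of $\mB_0(\vw\odot\vx,k)$, so the extremal values of Lemma~\ref{lem:tsum-l0-min-max} need not be attained and $I_1$ is no longer obviously equal to the sorted-coordinate sign formula. But your proposed resolution does not close this gap, and it would not yield the stated bound. Stratifying by the zero-pattern $S$ and summing over the $2^d$ strata gives at best a bound of the form $2^d\cdot(\text{poly}(n))$, not the additive constant $1$ in the proposition; moreover, on each stratum the function $\tT_{\vw,k}$ does \emph{not} cleanly reduce to a lower-dimensional instance with ``an adjusted truncation level,'' because the $|S|$ forced zeros of $\vw\odot\vx$ sit in the middle of the sorted order and which of them get truncated depends on the signs of the nonzero products, not just on $|S|$. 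You also never establish that on a stratum with zeros the quantity $I_1(\vw,\vx)$ is expressible as a Boolean function of sorted-coordinate sums at all --- you only get an inequality, and as you yourself note, an upper bound on each value does not bound the number of patterns.

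The paper's actual route is a dichotomy on the number of nonzero coordinates of $\vw$, not on the zero-pattern. If $|\{i:w_i\neq 0\}|\le k$, then $\langle\vw,\vxp\rangle_k=0$ identically (Lemma~\ref{lem:vw-at-most-k-nonzero-trp-zero}), so every such $\vw$ produces the single pattern $(\one{y_i=-1}:i\in[n])$; this is the source of the $+1$. If $|\{i:w_i\neq 0\}|>k$, Lemma~\ref{lem:Aw-large-k-I1-I2} shows that the identity
\begin{equation*}
  I_1(\vw,\vx)=\one{\sgn(u_{(1)}+\dots+u_{(d-2k)})\neq\sgn(u_{(2k+1)}+\dots+u_{(d)})}
\end{equation*}
still holds \emph{exactly}, even when the sup/inf over the $\ell_0$ ball are not attained; the proof is a four-way case analysis on the signs of $u_{(k+1)}$ and $u_{(d-k)}$, using Lemma~\ref{lem:inf-sup-w-x-k-equality-cond} to identify when equality holds and constructing an explicit adversarial perturbation in the remaining subcase. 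With that identity in hand, a single coding matrix $M(\vw)$ covers all $\vw$ with more than $k$ nonzero coordinates simultaneously, giving the factor $\bigl(enm/d\bigr)^d$ with no stratification and no extra multiplicative constant. This lemma is the missing idea in your proposal; without it (or an equivalent), the zero-coordinate case remains open.
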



Recall from the discussion in Section~\ref{sec:main-results-Pi-tT-bound-max}
that due to the decomposition in~\eqref{eq:tT-max-1-I1-I2-1}
and~\eqref{eq:I1-sgn-inf-sup}, we are interested in the minimum and maximum of
$\tsum_k(\vup)$ when $\vup \in \mB_0(\vu, k)$ with $\vu = \vw \odot \vx$. When
$w_i \neq 0$ for all $i \in [d]$, we saw that this is doable using
Lemma~\ref{lem:tsum-l0-min-max}. Lemma~\ref{lem:Aw-large-k-I1-I2} below suggests
that as long as $\vw$ has ``sufficiently'' many nonzero coordinates, the signs of
the minimum and maximum have a similar expression, even though their exact value
might be different. The proof of Lemma~\ref{lem:Aw-large-k-I1-I2} is given in Appendix~\ref{app:lem-aw-large-k-I1-I2-proof}.

\begin{lem}
  \label{lem:Aw-large-k-I1-I2}
  Given $\vw \in \reals^d$  such that $|\{i \in [d]: w_i \neq 0\}| > k$, 
  for all $\vx \in \reals^d$, we have 
  \begin{equation*}
    I_1(\vw, \vx) =    \one{\sgn(u_{(1)} + \dots + u_{(d-2k)}) \neq \sgn(u_{(2k+1)} + \dots u_{(d)})} =: I_2(\vw, \vx),
  \end{equation*}
  where $\vu = \vw \odot \vx$.
\end{lem}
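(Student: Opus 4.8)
Write $\vu := \vw \odot \vx$, let $S := \{i \in [d] : w_i \neq 0\}$, and let $\mathcal{R}$ denote the set $\{\vw \odot \vxp : \vxp \in \mB_0(\vx, k)\}$. The plan is to reduce $I_1(\vw,\vx)$ to a statement about the range of $\tsum_k$ over $\mathcal{R}$ and then to compare with $I_2(\vw,\vx)$ using Lemma~\ref{lem:tsum-l0-min-max} together with an explicit perturbation. First I would observe that modifying a coordinate $i$ with $w_i = 0$ does not change $\vw \odot \vxp$, and that $\vw \odot \cdot$ is a bijection on the coordinates in $S$; hence $\mathcal{R}$ is exactly the set of $\vup \in \reals^d$ that agree with $\vu$ off $S$ and differ from $\vu$ in at most $k$ coordinates, and, importantly, for every $T \subseteq S$ with $|T| = k$ and every assignment of values on $T$, the vector equal to $\vu$ off $T$ and to those values on $T$ lies in $\mathcal{R}$. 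Since $\langle \vw, \vxp \rangle_k = \tsum_k(\vw \odot \vxp)$ and $\vu \in \mathcal{R}$, it follows from the definition of $I_1$ in~\eqref{eq:tT-max-1-I1-I2-1} that $I_1(\vw,\vx) = 1$ exactly when $\mathcal{R}$ contains a vector with negative truncated sum and a vector with nonnegative truncated sum; and, because $u_{(1)} + \dots + u_{(d-2k)} \leq u_{(2k+1)} + \dots + u_{(d)}$, we have $I_2(\vw,\vx) = 1$ exactly when $u_{(1)} + \dots + u_{(d-2k)} < 0$ and $u_{(2k+1)} + \dots + u_{(d)} \geq 0$.

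The inequality $I_1(\vw,\vx) \leq I_2(\vw,\vx)$ then holds for every $\vw$, with no assumption: since $\mathcal{R} \subseteq \mB_0(\vu,k)$, Lemma~\ref{lem:tsum-l0-min-max} gives $u_{(1)} + \dots + u_{(d-2k)} \leq \tsum_k(\vup) \leq u_{(2k+1)} + \dots + u_{(d)}$ for all $\vup \in \mathcal{R}$, so a reachable negative value forces $u_{(1)} + \dots + u_{(d-2k)} < 0$ and a reachable nonnegative value forces $u_{(2k+1)} + \dots + u_{(d)} \geq 0$.

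The substance of the lemma is the reverse inequality, which is where the hypothesis $|S| > k$ is used. Assume $u_{(1)} + \dots + u_{(d-2k)} < 0$ and $u_{(2k+1)} + \dots + u_{(d)} \geq 0$. To exhibit a reachable vector with nonnegative truncated sum, I would take $T \subseteq S$ with $|T| = k$ consisting of $k$ coordinates on which $\vu$ is smallest, and let $\vup$ agree with $\vu$ off $T$ and equal a large positive constant on $T$; a sorting computation shows $\vup \in \mathcal{R}$ and $\tsum_k(\vup)$ equals the sum of the $d - 2k$ largest entries of the multiset $\{u_i : i \notin T\}$. If $\vu$ has at least $k$ negative coordinates, this multiset is $\{u_{(k+1)}, \dots, u_{(d)}\}$, so $\tsum_k(\vup) = u_{(2k+1)} + \dots + u_{(d)} \geq 0$; otherwise $T$ contains all negative coordinates of $\vu$, the multiset has no negative entry, and again $\tsum_k(\vup) \geq 0$. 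A reachable vector with negative truncated sum is obtained symmetrically, taking $T \subseteq S$ of size $k$ on which $\vu$ is largest and a large negative constant on $T$: then $\tsum_k(\vup)$ is the sum of the $d - 2k$ smallest entries of $\{u_i : i \notin T\}$, which equals $u_{(1)} + \dots + u_{(d-2k)} < 0$ if $\vu$ has at least $k$ positive coordinates, and is a sum of nonpositive numbers otherwise. In this last case one must still get strict negativity (matching the strict inequality in the description of $I_2$), and this is exactly where $|S| > k$ enters: since $u_{(1)} + \dots + u_{(d-2k)} < 0$ forces $\vu$ to have at least one negative coordinate, all nonzero coordinates of $\vu$ lie in $S$, and $T$ picks the largest coordinates of $S$ (hence zero coordinates of $\vu$ lying in $S$ before negative ones), deleting only $k$ of the more than $k$ coordinates of $S$ cannot remove all negative coordinates of $\vu$, so $\{u_i : i \notin T\}$ retains a strictly negative entry among its $d - 2k$ smallest and $\tsum_k(\vup) < 0$. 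Combining the two inequalities gives $I_1(\vw,\vx) = I_2(\vw,\vx)$.

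I expect the main obstacle to be this last step: the order-statistics bookkeeping needed to evaluate $\tsum_k$ of the $\pm\infty$-type perturbations, and, above all, the verification that $|S| > k$ is precisely what guarantees a negative coordinate of $\vu$ survives after deleting the $k$-coordinate set $T$, which is what upgrades the inequality $\tsum_k(\vup) \leq 0$ to the strict $\tsum_k(\vup) < 0$ demanded by the asymmetric form of $I_2$.
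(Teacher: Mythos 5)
Your proof is correct, and it is organized genuinely differently from the paper's. The paper splits into four cases according to the signs of $u_{(k+1)}$ and $u_{(d-k)}$ (essentially, according to $\sgn(\langle \vw,\vx\rangle_k)$) and in each case invokes an auxiliary result, Lemma~\ref{lem:inf-sup-w-x-k-equality-cond}, which gives sufficient conditions ($|\{i: u_i\geq 0,\, w_i\neq 0\}|\geq k$ or $|\{i: u_i\leq 0,\, w_i\neq 0\}|\geq k$) under which the infimum and supremum of $\langle\vw,\vxp\rangle_k$ over the ball are \emph{attained} and equal $u_{(1)}+\dots+u_{(d-2k)}$ and $u_{(2k+1)}+\dots+u_{(d)}$; the residual sub-case is handled by a separate construction ($x'_i=-w_i$ on $\admiss_{\vw}^+$). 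You instead prove the two implications separately: $I_1\leq I_2$ holds for every $\vw$ directly from Lemma~\ref{lem:tsum-l0-min-max} and the containment $\{\vw\odot\vxp:\vxp\in\mB_0(\vx,k)\}\subseteq\mB_0(\vu,k)$, while for $I_2\leq I_1$ you exhibit reachable perturbations whose truncated sums merely have the required \emph{signs}, even when they do not attain the extremal values. Your handling of the degenerate case is sound: when fewer than $k$ coordinates of $\vu$ are positive, the surviving multiset $\{u_i: i\notin T\}$ is nonpositive, and your argument that $T$ (the $k$ largest coordinates of $S$) cannot absorb every negative coordinate of $\vu$ when $|S|>k$ is exactly right — if $T$ contained a negative coordinate, every element of the nonempty set $S\setminus T$ would be even smaller, hence negative. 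What your route buys is a shorter argument with no case split on $\sgn(\tsum_k(\vu))$ and a transparent localization of where the hypothesis $|S|>k$ enters; what the paper's route buys is the standalone Lemma~\ref{lem:inf-sup-w-x-k-equality-cond}, which records precisely when the inf and sup are attained — information your argument deliberately discards because only the signs matter here.
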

Additionally, if $|\{i \in [d]: w_i \neq 0\}| \leq k$, $\langle \vw, \vxp \rangle_k$
is always zero. This  is stated in Lemma~\ref{lem:vw-at-most-k-nonzero-trp-zero}
below and is proved in Appendix~\ref{app:lem-at-most-k-nonzero-proof}.

\begin{lem}
\label{lem:vw-at-most-k-nonzero-trp-zero}
  Given $\vw \in \reals^d$ with $|\{i \in [d]: w_i \neq 0\}| \leq k$, for
  all $\vx \in \reals^d$ we have $\langle \vw,
  \vx \rangle_k = 0$.
\end{lem}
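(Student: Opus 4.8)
The plan is to analyze directly the sorted coordinates of $\vu := \vw \odot \vx$. The key observation is that $\vu$ inherits the sparsity of $\vw$: since $u_i = w_i x_i$, we have $u_i = 0$ whenever $w_i = 0$, so the number of nonzero entries of $\vu$ is at most $|\{i \in [d] : w_i \neq 0\}| \le k$.

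Next I would split these (at most $k$) nonzero entries according to sign: say $\vu$ has exactly $p$ strictly positive entries and $m$ strictly negative entries, so $p + m \le k$. After sorting into $u_{(1)} \le \dots \le u_{(d)}$, the sorted vector consists of the $m$ negative values in positions $1, \dots, m$, followed by the $d - p - m \ge d - k$ zeros in positions $m+1, \dots, d-p$, followed by the $p$ positive values in positions $d-p+1, \dots, d$. In particular $u_{(i)} = 0$ for every index $i$ with $m+1 \le i \le d-p$.

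Finally I would check that the truncation window lies inside this zero block. By definition $\langle \vw, \vx \rangle_k = \tsum_k(\vu) = \sum_{i=k+1}^{d-k} u_{(i)}$. Since $m \le k$ we have $k+1 \ge m+1$, and since $p \le k$ we have $d-k \le d-p$; therefore every index $i \in \{k+1, \dots, d-k\}$ satisfies $m+1 \le i \le d-p$, so $u_{(i)} = 0$ for all such $i$ and the sum vanishes, giving $\langle \vw, \vx \rangle_k = 0$.

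I do not anticipate any genuine obstacle here: the statement reduces to the elementary fact that at most $k$ nonzero entries can occupy at most $k$ positions on each side of the length-$(d-2k)$ truncation window, so the window itself sees only zeros. The only point requiring a little care is to track the exact positions of the zero block after sorting and to verify both endpoint inequalities $k \ge m$ and $k \ge p$.
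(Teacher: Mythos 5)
Your proof is correct and follows essentially the same route as the paper's: both observe that $\vw \odot \vx$ has at most $k$ nonzero entries and that truncation therefore removes all of them, leaving only zeros in the window. Your version is actually slightly more careful than the paper's, since you explicitly verify that the negatives sort to the bottom $k$ positions and the positives to the top $k$, which the paper's proof takes for granted.
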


In fact, Lemmas~\ref{lem:Aw-large-k-I1-I2}
and~\ref{lem:vw-at-most-k-nonzero-trp-zero} together help us characterize the
two terms in~\eqref{eq:I1-sgn-inf-sup} depending on the number of nonzero
elements in $\vw$. Therefore, as was discussed in
Section~\ref{sec:main-results-Pi-tT-bound-max}, a coding technique similar to
that of Section~\ref{sec:formal-tip-bound} and the proof of Proposition~\ref{prop:tip-growth-bound} can be used to bound
$\Pi_{\tmT_{d,k}}(n)$ as in Proposition~\ref{prop:max-l0-growth-bound}. The proof of 
Proposition~\ref{prop:max-l0-growth-bound} is given in Appendix~\ref{app:prop-max-growth-proof}.



\section{Conclusion}
\label{sec:conclusion-colt}

In this paper, we proved a distribution-independent generalization bound for the binary classification
setting with $\ell_0$--bounded adversarial perturbation. We saw that deriving such generalization bound is
challenging, in particular due to  $(i)$ the nonlinearity of the truncated inner
product, and $(ii)$ non-smoothness and non-convexity of the $\ell_0$ ball. We
tackled these challenges by introducing a novel  technique which enables
us to bound the growth function of our hypothesis class.



\newcommand{\etalchar}[1]{$^{#1}$}

\appendix

\section{Challenges of Workign with the Truncated Inner Product}
\label{app:tip-challenges}

In this section, we provide some examples to show that the truncated inner
product is indeed challenging to work with, and does not behave similar to the
usual inner product.

\begin{example}
\textbf{Truncated inner product is not linear.} Let $d = 3$, $k=1$, $\vw = (1,1,1)$, $\vx_1 =
(10,9,-100)$, and $\vx_2 = (-100,1,2)$. Then, $\langle \vw, \vx_1 \rangle_k =
9$, $\langle \vw, \vx_2 \rangle_k = 1$, while $\langle \vw, \vx_1 + \vx_2
\rangle_k = -90$.
\end{example}

Even though the truncated inner product is not linear, there might be some hope
to write $\langle \vw, \vx_1+ \vx_2 \rangle_k$ in terms of (possibly a
complicated and non-linear) function of $\langle \vw, \vx_1 \rangle_k$ and
$\langle \vw, \vx_2 \rangle_k$. The following example shows that this is not the case.

\begin{example}
  \label{example:tip-sum-not-function}
\textbf{In general, $\langle \vw, \vx_1 + \vx_2 \rangle_k$ is not necessarily a
  function of $\langle \vw, \vx_1 \rangle_k$ and $\langle \vw, \vx_2
  \rangle_k$}. In order to show this, we introduce $\vw, \vx_1, \vx_2, \tvx_1$,
and $\tvx_2$ such that $\langle \vw, \vx_1 \rangle_k = \langle \vw, \tvx_1
\rangle$, $\langle \vw, \vx_2 \rangle_k = \langle \vw, \tvx_2 \rangle_k$, but
$\langle \vw, \vx_1 + \vx_2 \rangle_k \neq \langle \vw, \tvx_1 + \tvx_2
\rangle_k$. For instance, take $d = 3$, $k=1$, and
\begin{equation}
  \label{eq:tip-counterexample-2}
  \begin{aligned}
    \vw &= (1,1,1) \\
    \vx_1 &= (10,9,-100) \\
    \vx_2 &= (-100,1,2) \\
    \tvx_1 &= (0,9,10) \\
    \tvx_2 &= (0,1,10).
  \end{aligned}
\end{equation}
Then, we have
\begin{gather*}
  \langle \vw, \vx_1 \rangle_k = \langle \vx, \tvx_1 \rangle_k = 9 \\
  \langle \vw, \vx_2 \rangle_k = \langle \vx, \tvx_2 \rangle_k = 1,
\end{gather*}
but
\begin{equation*}
  \langle \vw, \vx_1 + \vx_2 \rangle_k = -90 \neq 10 = \langle \vw, \tvx_1 + \tvx_2 \rangle_k.
\end{equation*}
\end{example}

In our setting of classification where we want to bound the combinatorial
complexity, we usually do not need to know the exact value of the truncated
inner product, and knowing its sign might be enough. For the classical inner product,
the linearity clearly implies that for instance, if $\langle \vw, \vx_1 \rangle
> 0$ and $\langle \vw, \vx_2 \rangle > 0$, then $\langle \vw, \vx_1 + \vx_2
\rangle > 0$. Surprisingly, even this is not true for the truncated inner
product.

\begin{example}
  \textbf{$\langle \vw, \vx_1 \rangle_k > 0$ and $\langle \vw, \vx_2 \rangle >
    0$ do not necessarily imply that $\langle \vw, \vx_1 + \vx_2 \rangle_k >
    0$.} To see this, let $d = 3, k=1$, and take the vectors $\vw, \vx_1, \vx_2$
 from Example~\ref{example:tip-sum-not-function} above. Clearly, $\langle \vw,
 \vx_1 \rangle_k > 0$ and $\langle \vw, \vx_2 \rangle_k > 0$, but $\langle \vw,
 \vx_1 + \vx_2 \rangle_0 < 0$.
\end{example}

The above examples show that clearly the methods in classical machine learning
to bound the combinatorial complexity and the VC dimension cannot be extended to
the truncated inner product, and we really need new techniques such as the ones
presented in this paper.


\section{Proof of Lemma~\ref{lem:tsum-l0-min-max}}
\label{app:lem-tsum-l0-min-max-proof}

The following Lemma~\ref{lem:order-vector-equivalent} gives
an equivalent form for the quantities $u_{(i)}$, $1 \leq i \leq d$. The proof of
Lemma~\ref{lem:order-vector-equivalent}   is
straightforward and hence is omitted.

\begin{lem}
  \label{lem:order-vector-equivalent}
  Given $\vu = (u_1, \dots, u_d) \in \reals^d$, for $1 \leq i \leq d$, we have
  \begin{equation*}
    u_{(i)} = \min \{\alpha: |\{1 \leq j \leq d: u_j \leq \alpha\}| \geq i\}.
  \end{equation*}
\end{lem}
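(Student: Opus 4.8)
The plan is to reduce the claim to an elementary statement about the non-decreasing step function that counts how many coordinates of $\vu$ lie below a given threshold. Concretely, I would set $N(\alpha) := |\{j \in [d] : u_j \le \alpha\}|$ for $\alpha \in \reals$; this is a non-decreasing, integer-valued function with $N(\alpha) = 0$ for $\alpha < \min_j u_j$ and $N(\alpha) = d$ for $\alpha \ge \max_j u_j$, and it changes value only at points of $\{u_1, \dots, u_d\}$. Writing $\alpha^\ast := \min\{\alpha : N(\alpha) \ge i\}$, I would first note that the set on the right is nonempty (it contains $\max_j u_j$, since $i \le d$) and that the infimum is attained: $N$ is right-continuous because the defining inequality $u_j \le \alpha$ is non-strict, so if $\alpha_n \downarrow \beta$ with $N(\alpha_n) \ge i$ then $N(\beta) = \lim_n N(\alpha_n) \ge i$; equivalently, $N$ takes only finitely many values and jumps only at the $u_j$'s, so a minimizing $\alpha$ can be taken among the $u_j$'s.

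Next I would prove the two inequalities $\alpha^\ast \le u_{(i)}$ and $\alpha^\ast \ge u_{(i)}$, which together give the lemma. For the first, in the sorted order $u_{(1)} \le \dots \le u_{(d)}$ the coordinates realizing $u_{(1)}, \dots, u_{(i)}$ are all $\le u_{(i)}$, so $N(u_{(i)}) \ge i$; hence $u_{(i)}$ lies in the set over which we minimize, giving $\alpha^\ast \le u_{(i)}$. For the second, take any $\alpha < u_{(i)}$: every $j$ with $u_j \le \alpha$ then satisfies $u_j < u_{(i)}$, and in the sorted vector only positions $1, \dots, i-1$ can carry a value strictly below $u_{(i)}$ (because $u_{(m)} \ge u_{(i)}$ for $m \ge i$), so $N(\alpha) \le i-1 < i$. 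Thus no $\alpha < u_{(i)}$ belongs to the set, whence $\alpha^\ast \ge u_{(i)}$. Combining, $\alpha^\ast = u_{(i)}$.

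I do not expect a genuine obstacle here — this is precisely the ``straightforward and omitted'' lemma — and the only point that needs care is the treatment of ties among the $u_j$'s: repeated values make $N$ jump by more than one, so the counting must be phrased in terms of \emph{positions} in the sorted vector rather than in terms of distinct values, as done above. A secondary minor point, handled by the right-continuity/finiteness remark, is to justify that the minimum (not merely an infimum) is attained so that the expression on the right-hand side is well defined.
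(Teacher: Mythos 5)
Your proof is correct. The paper explicitly omits the proof of this lemma as ``straightforward,'' and your argument is the canonical one an author would have in mind: both inequalities $\alpha^\ast \le u_{(i)}$ and $\alpha^\ast \ge u_{(i)}$ are established cleanly, the attainment of the minimum is properly justified via right-continuity/finiteness of the counting function, and ties among the $u_j$'s are handled correctly by counting sorted positions rather than distinct values.
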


\begin{lem}
\label{lem:up-k+1-u-i}
  Assume that $\vu, \vup \in \reals^d$ are given such that $\snorm{\vu - \vup}_0
  \leq k$. Then for $1 \leq i \leq d -k$, we have
  \begin{equation*}
    u'_{(k+i)} \geq u_{(i)}.
  \end{equation*}
\end{lem}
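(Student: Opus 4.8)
The plan is to use the order-statistic characterization from Lemma~\ref{lem:order-vector-equivalent} together with a simple counting argument exploiting the fact that $\vu$ and $\vup$ agree outside a set of size at most $k$. Before starting, note that the hypothesis $1 \leq i \leq d-k$ ensures $k+i \leq d$, so $u'_{(k+i)}$ is a well-defined order statistic of the $d$-dimensional vector $\vup$.

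First I would fix the set of coordinates on which the two vectors differ, namely $S := \{j \in [d] : u_j \neq u'_j\}$; by hypothesis $|S| = \snorm{\vu - \vup}_0 \leq k$. Set $\alpha := u'_{(k+i)}$. Since $u'_{(k+i)}$ is the $(k+i)$-th smallest entry of $\vup$, the set $A := \{j \in [d] : u'_j \leq \alpha\}$ has $|A| \geq k+i$.

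Next I would discard the ``bad'' coordinates from $A$: because $|S| \leq k$, the set $A \setminus S$ has size at least $(k+i) - k = i$. For every $j \in A \setminus S$ we have $u_j = u'_j \leq \alpha$, so $\{j \in [d] : u_j \leq \alpha\} \supseteq A \setminus S$, and therefore $|\{j \in [d] : u_j \leq \alpha\}| \geq i$. Applying Lemma~\ref{lem:order-vector-equivalent} to $\vu$ — in the direction ``if at least $i$ coordinates of $\vu$ are $\leq \alpha$, then the minimal such threshold $u_{(i)}$ is $\leq \alpha$'' — we conclude $u_{(i)} \leq \alpha = u'_{(k+i)}$, which is precisely the claim.

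There is no substantial obstacle here; the argument is a one-line counting estimate once the equivalent form of the order statistics from Lemma~\ref{lem:order-vector-equivalent} is in hand. The only points requiring mild care are (i) invoking that lemma in the correct direction, and (ii) checking the index range so that the order statistic $u'_{(k+i)}$ makes sense, both of which are immediate.
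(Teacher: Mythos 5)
Your proof is correct and follows essentially the same route as the paper's: both use the order-statistic characterization of Lemma~\ref{lem:order-vector-equivalent} applied at the threshold $\alpha = u'_{(k+i)}$, then count that at least $i$ of the $k+i$ coordinates of $\vup$ below the threshold must agree with $\vu$. The only cosmetic difference is that you name the disagreement set $S$ explicitly, while the paper argues directly on the indices $j_1,\dots,j_{k+i}$.
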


\begin{proof}[Proof of Lemma~\ref{lem:up-k+1-u-i}]
 Fix $1 \leq i \leq d-k$. Using Lemma~\ref{lem:order-vector-equivalent} above,
 we have
 \begin{equation*}
   u'_{(k+i)} = \min \{\alpha: |\{1 \leq j \leq d: u'_j \leq \alpha\}| \geq k+i\}.
 \end{equation*}
 Let $\alpha^*$ be the minimum in the right hand side. Note that  there are at least $k+i$
 coordinates in $\vup$ with values no more than $\alpha^*$, let $1\leq j_1 <
 \dots <  j_{k+i} \leq d$ be the index of such coordinates. Since $\snorm{\vu -
   \vup}_0\leq k$, all the
 coordinates except for at most $k$ of them in $\vup$ are identical to $\vu$.
 Therefore, the value of $\vu$ and $\vup$ are different in at most $k$ of the
 indices $j_1, \dots, j_{k+1}$. Consequently, there are at least $i$
 coordinates in $j_1, \dots, j_{k+i}$ where $\vup$ and $\vu$ match. Thereby
 \begin{equation*}
   u_{(i)} = \min\{\beta: |\{1 \leq j \leq d: x_j \leq \beta\}| \geq i\} \leq \alpha^* = u'_{(k+i)},
 \end{equation*}
 which completes the proof.
\end{proof}

\begin{proof}[Proof of Lemma~\ref{lem:tsum-l0-min-max}]
  Using Lemma~\ref{lem:up-k+1-u-i}, we may write
  \begin{align*}
    \tsum_k(\vup) &= \sum_{i=k+1}^{d-k} u'_{(i)} \\
                  &= \sum_{i=1}^{d-2k} u'_{(k+i)} \\
    &\geq \sum_{i=1}^{d-2k} u_{(i)},
  \end{align*}
  which is the desired lower bound. On the other hand, since $\snorm{\vup -
    \vu}_0 = \snorm{\vu - \vup}_0 \leq k$, using Lemma~\ref{lem:up-k+1-u-i} after
  swapping the role of $\vu$ and $\vup$, we get
  \begin{align*}
    u_{(2k+1)} + \dots + u_{(d)} &= \sum_{i=k+1}^{d-k} u_{(k+i)} \\
                                 &\geq \sum_{i=k+1}^{d-k} u'_{(i)} \\
    &= \tsum_k(\vup),
  \end{align*}
  which is precisely the desired upper bound.

  Now let $u_{i_1} \leq \dots \leq u_{i_d}$ be an ordering of $\vu$ and let
  $\vup \in \mB_0(\vu, k)$ be such that
\begin{align*}
  &u'_{i_1} = u'_{i_2} = \dots =u'_{i_k} = u_{(d)} + 1 \\
  &u'_{i_j} = u_{i_j} \qquad \text{for } j > k.
\end{align*}
Clearly, the largest $k$ values of $\vup$ are $u_{(d)}+1$ which correspond to
coordinates $i_1, \dots, i_k$. Furthermore, the smallest $k$ values of $\vup$
are $u'_{i_{k+j}} = u_{i_{k+j}} = u_{(k+j)}$ for $1 \leq j \leq k$.
Consequently, after truncating the $k$ largest and the $k$ smallest values in
$\vup$, we get
\begin{equation*}
  \tsum_k(\vup) = u'_{i_{2k+1}} + \dots + u'_{i_d} = u_{i_{2k+1}} + \dots + u_{i_d} = u_{(2k+1)} + \dots + u_{(d)}.
\end{equation*}
Hence, $\vup$ attains the upper bound. Similarly, the vector $\vup \in
\mB_0(\vu, k)$ such that
 \begin{align*}
   &u'_{i_{d-k+1}} = u'_{i_{d-k+2}} = \dots =u'_{i_d} = u_{(1)} -1 \\
   &u'_{i_j} = u_{i_j} \qquad \text{for } j \leq d-k,
 \end{align*}
attains the lower bound. Hence, the inequalities are sharp and are in fact the
minimum and the maximum of $\tsum_k(\vup)$ as $\vup$ ranges over $\mB_0(\vu,
k)$. This completes the proof.
\end{proof}


\section{Proof of Lemma~\ref{lem:trp-sign-code}}
\label{app:trp-sign-code-proof}

\begin{proof}[Proof of Lemma~\ref{lem:trp-sign-code}]
Fix a subset $\mJ \subset [d]$ with $|\mJ| = d-2k$. Note that for $1 \leq j \leq
\binom{d}{2}$, if $\vbeta_j$ has value 1 at coordinate $a$ and $-1$ at coordinate $b$,
then we have
\begin{equation*}
  \langle \vw, \vx \odot \vbeta_j \rangle = w_a x_a - w_b x_b.
\end{equation*}
Therefore
\begin{equation*}
  \sgn(\langle \vw, \vx \odot \vbeta_j \rangle) =
  \begin{cases}
    1 & w_a x_a \geq w_b x_b  \\
    -1 & \text{otherwise}.
  \end{cases}
\end{equation*}
This means that given $\left(\sgn(\langle \vw, \vx
  \odot \vbeta_j \rangle): 1 \leq j \leq \binom{d}{2}\right)$, we can 
sort the elements in $\vu = \vw \odot \vx$ and obtain an ordering $i_1 \leq
\dots \leq i_d$ such that $u_{i_1} \leq u_{i_2} \leq \dots \leq u_{i_d}$. This
means that
\begin{equation*}
 \sum_{j \in \mJ} u_{(j)}  = \sum_{j \in \mJ} u_{i_j}.
\end{equation*}
But if $\tilde{\mJ} := \{i_j : j \in \mJ\}$, and $\valpha_{i(\tilde{\mJ})}$ is the indicator vector
associated to $\tilde{\mJ}$, we have
\begin{equation*}
  \sgn\left( \sum_{j \in \mJ} u_{(j)} \right) = \sgn\left( \sum_{j \in \tilde{\mJ}} u_j \right) = \sgn\left( \langle \vw, \vx \odot \valpha_{i(\tilde{\mJ})} \rangle  \right),
\end{equation*}
which is known given $\left(\sgn(\langle \vw, \vx \odot \valpha_i \rangle):
  1 \leq i \leq \binom{d}{2k}\right)$. The proof is complete since as was
discussed above, $\tilde{\mJ}$ is a function of $\left(\sgn(\langle \vw, \vx
  \odot \vbeta_j \rangle): 1 \leq j \leq \binom{d}{2}\right)$.
\end{proof}


\section{Proof of Proposition~\ref{prop:tip-growth-bound}}
\label{app:prop-trp-growth-proof}

\begin{proof}[Proof of Proposition~\ref{prop:tip-growth-bound}]
  Fix $\vx_1, \dots, \vx_n \in \reals^d$. Let $m := \binom{d}{2k} +
  \binom{d}{2}$. For $1 \leq j \leq \binom{d}{2k}$, let $\vx_{i,j} := \vx_i
  \odot \valpha_j$, and for $1+\binom{d}{2k} \leq j \leq m$, let $\vx_{i,j} :=
  \vx_i \odot \vbeta_{j - \binom{d}{2k}}$.
  For $\vw \in \reals^d$, we define the $n \times m$ matrix $M(\vw)$ with $\pm
  1$ entries as follows:
  \begin{equation*}
    M(\vw) :=
    \begin{bmatrix}
      \sgn(\langle \vw, \vx_{1,1} \rangle)  & \dots & \sgn(\langle \vw, \vx_{1,m} \rangle) \\
      \vdots & \ddots & \vdots \\
      \sgn(\langle \vw, \vx_{n,1} \rangle) & \dots & \sgn(\langle \vw, \vx_{n,m} \rangle)
    \end{bmatrix}
  \end{equation*}
Note that from Lemma~\ref{lem:trp-sign-code}, $T_{\vw, k}(\vx_i) = \sgn(\langle
\vw, \vx_i \rangle_k)$ for $1 \leq i \leq n$ are determines by knowing the
matrix $M(\vw)$. Therefore, the total number of configurations for $(T_{\vw,
  k}(\vx_1), \dots, T_{\vw, k}(\vx_n))$ when $T_{\vw, k}$ ranges over $\mT_{d,
  k}$ is bounded from above by the total number of configurations for the matrix
$M(\vw)$ when $\vw$ ranges over $\reals^d$. In other words,
\begin{equation}
  \label{eq:T-vx-1n-bound-M-w}
  |\{(T_{\vw, k}(\vx_1), \dots, T_{\vw, k}(\vx_n)): T_{\vw, k} \in \mT_{d,k}\}| \leq |\{M(\vw): \vw \in \reals^d\}|.
\end{equation}
Note that for $1 \leq i \leq n$ and $1 \leq j \leq m$, $\vx_{i,j}$ are vectors
in $\reals^d$.
Consequently, since the VC dimension of the sign of the inner product
in $\reals^{d}$ is $d$, and $n > d+1$, the Sauer–Shelah lemma (see, for instance,
\cite[Lemma 6.10]{shalev2014understanding})
implies that
\begin{equation}
  \label{eq:Mjw-conf-j-alpha}
  |\{M(\vw): \vw \in \reals^d\}| \leq \left( \frac{e nm}{d} \right)^{d}.
\end{equation}
Comparing this 
with~(\ref{eq:T-vx-1n-bound-M-w}), we realize that
\begin{equation*}
  |\{(T_{\vw, k}(\vx_1), \dots, T_{\vw, k}(\vx_n)): T_{\vw, k} \in \mT_{d,k}\}| \leq \left( \frac{e n\left( \binom{d}{2k} + \binom{d}{2} \right)}{d} \right)^{d}.
\end{equation*}
Since the same bound  holds for all $\vx_1, \dots, \vx_n$, we get the desired
bound.
\end{proof}


\section{Proof of Lemma~\ref{lem:Aw-large-k-I1-I2}}
\label{app:lem-aw-large-k-I1-I2-proof}

Given $\vw \in \reals^d$, we define
\begin{equation*}
  \admiss_{\vw} := \{i \in [d]: w_i \neq 0\},
\end{equation*}
and $\admiss_{\vw}^c := [d] \setminus \admiss_{\vw}$.
The following lemma will be crucial to prove Lemma~\ref{lem:Aw-large-k-I1-I2}.
The proof of Lemma~\ref{lem:inf-sup-w-x-k-equality-cond} below is given at the
end of this section.

\begin{lem}
  \label{lem:inf-sup-w-x-k-equality-cond}
  Assume that  $\vw, \vx \in \reals^d$ are given and define $\vu := \vw \odot \vx$.
  \begin{enumerate}
  \item We have 
    \begin{equation*}
      \inf\{\langle \vw, \vxp \rangle_k: \vxp \in \mB_0(\vx, k)\} \geq u_{(1)} + \dots +u_{(d-2k)}.
    \end{equation*}
    Moreover,  if $w_i \neq 0$ for all $i \in [d]$, or $|\{i: u_i \geq 0
    \text{ and } w_i \neq 0\}| \geq k$, then the infimum is minimum and equality
    holds. 
  \item We have 
    \begin{equation*}
      \sup\{\langle \vw, \vxp \rangle_k: \vxp \in \mB_0(\vx, k)\} \leq u_{(2k+1)} + \dots +u_{(d)}.
    \end{equation*}
    Moreover,  if $w_i \neq 0$ for all $i \in [d]$, or $|\{i: u_i \leq 0
    \text{ and } w_i \neq 0\}| \geq k$, then the supremum is maximum and
    equality holds.
  \end{enumerate}
\end{lem}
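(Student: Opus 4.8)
The plan is to push everything through the coordinate-wise product $\vu=\vw\odot\vx$ and reduce to Lemma~\ref{lem:tsum-l0-min-max}, which already identifies the extrema of $\tsum_k$ over an $\ell_0$ ball. For \emph{both} inequalities: if $\vxp\in\mB_0(\vx,k)$ and $\vup:=\vw\odot\vxp$, then $\snorm{\vup-\vu}_0\le\snorm{\vxp-\vx}_0\le k$ (multiplying coordinate $i$ by the fixed scalar $w_i$ cannot create a disagreement that was not already there), so $\vup\in\mB_0(\vu,k)$; since $\langle\vw,\vxp\rangle_k=\tsum_k(\vup)$, Lemma~\ref{lem:tsum-l0-min-max} gives $u_{(1)}+\dots+u_{(d-2k)}\le\langle\vw,\vxp\rangle_k\le u_{(2k+1)}+\dots+u_{(d)}$, and passing to the infimum and supremum over $\vxp$ yields the two displayed bounds. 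When $w_i\ne 0$ for all $i$, the map $\vxp\mapsto\vw\odot\vxp$ is a bijection of $\mB_0(\vx,k)$ onto $\mB_0(\vu,k)$ (invert by dividing coordinatewise; it preserves the number of disagreements), so the infimum (resp.\ supremum) of $\tsum_k$ over the image equals that over $\mB_0(\vu,k)$, which is attained by Lemma~\ref{lem:tsum-l0-min-max}. This disposes of the first alternative in both parts.

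It remains to treat equality under the sign-restricted conditions, and by symmetry it suffices to handle part~1 (part~2 is obtained at the end). So assume $|\{i:u_i\ge 0,\ w_i\ne 0\}|\ge k$. The heart of the argument is to choose a set $S\subseteq\admiss_\vw$ with $|S|=k$ whose value-multiset $\{u_i:i\in S\}$ equals the multiset $\{u_{(d-k+1)},\dots,u_{(d)}\}$ of the $k$ largest entries of $\vu$; equivalently $\{u_i:i\notin S\}=\{u_{(1)},\dots,u_{(d-k)}\}$. Granting such an $S$, set $\mu:=\min_{j\notin S}u_j$ and define $\vxp$ by $x'_i=x_i$ for $i\notin S$ and $x'_i=(\mu-1)/w_i$ for $i\in S$ (legitimate since $w_i\ne 0$ on $S$, and $\snorm{\vxp-\vx}_0\le|S|=k$). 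Then $\vup:=\vw\odot\vxp$ equals $\mu-1$ on $S$ and $u_i$ elsewhere, and $\mu-1$ is strictly below every non-$S$ entry, so the $k$ smallest entries of $\vup$ are exactly those indexed by $S$; deleting them leaves the multiset $\{u_{(1)},\dots,u_{(d-k)}\}$, and deleting the top $k$ of \emph{that} leaves $\{u_{(1)},\dots,u_{(d-2k)}\}$. Hence $\langle\vw,\vxp\rangle_k=\tsum_k(\vup)=u_{(1)}+\dots+u_{(d-2k)}$, so the lower bound is a minimum.

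Constructing $S$ is exactly where the hypothesis enters. If $u_{(d-k+1)}>0$, every index realizing one of the $k$ largest values carries a nonzero value and therefore lies in $\admiss_\vw$, so any $k$ such indices work. If $u_{(d-k+1)}=0$, put $p:=|\{i:u_i>0\}|$, noting $p\le k$ (otherwise the $(d-k+1)$-th smallest value would be positive); the top-$k$ multiset then consists of the $p$ positive entries together with $k-p$ zeros, the $p$ positive indices automatically lie in $\admiss_\vw$, and the hypothesis forces $|\{i:u_i=0,\ w_i\ne 0\}|\ge k-p$, so take $S$ to be these $p$ indices together with $k-p$ such zero-valued indices. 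Finally $u_{(d-k+1)}<0$ cannot happen: it would make at least $d-k+1$ coordinates negative, forcing $|\{i:u_i\ge 0\}|\le k-1$ and contradicting the hypothesis. For part~2, apply part~1 to the pair $(\vw,-\vx)$: this leaves $\admiss_\vw$ unchanged, replaces $\vu$ by $-\vu$, and turns the hypothesis $|\{i:u_i\le 0,\ w_i\ne 0\}|\ge k$ into the part-1 hypothesis; since $\tsum_k(-\vv)=-\tsum_k(\vv)$ and $(-\vu)_{(j)}=-u_{(d+1-j)}$, and since $\vxp\mapsto-\vxp$ maps $\mB_0(\vx,k)$ onto $\mB_0(-\vx,k)$ while negating $\langle\vw,\cdot\rangle_k$, the part-1 identity for $(\vw,-\vx)$ rearranges into $\sup\{\langle\vw,\vxp\rangle_k:\vxp\in\mB_0(\vx,k)\}=u_{(2k+1)}+\dots+u_{(d)}$ with the supremum attained.

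I expect the choice of $S$ to be the only genuine obstacle: one must check that enough of the indices carrying the $k$ largest coordinates of $\vu=\vw\odot\vx$ fall inside the support of $\vw$, and this is precisely what the one-sided counting hypothesis $|\{i:u_i\ge 0,\ w_i\ne 0\}|\ge k$ provides (it also rules out the bad configuration $u_{(d-k+1)}<0$). Everything else is routine bookkeeping with order statistics together with Lemma~\ref{lem:tsum-l0-min-max}.
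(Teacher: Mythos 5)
Your proof is correct and follows essentially the same route as the paper's: the inequalities come from $\{\vw\odot\vxp:\vxp\in\mB_0(\vx,k)\}\subseteq\mB_0(\vu,k)$ together with Lemma~\ref{lem:tsum-l0-min-max}, and equality is attained by pushing $k$ top-value indices chosen inside the support of $\vw$ strictly below everything else. The only (harmless) differences are that you spell out the tie-breaking case analysis for choosing $S$ more explicitly than the paper does, and you obtain part~2 from part~1 via the reflection $\vx\mapsto-\vx$ rather than repeating the symmetric argument, which the paper omits anyway.
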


\begin{proof}[Proof of Lemma~\ref{lem:Aw-large-k-I1-I2}]
  Let $\vu = \vw \odot \vx$ and 
  \begin{equation*}
    u_{i_1} \leq u_{i_2} \leq \dots \leq u_{i_d},
  \end{equation*}
  be an ordering of the coordinates in $\vu$.  
 We divide the proof into cases, based on the sign
  of $u_{i_{k+1}} = u_{(k+1)}$ and $u_{i_{d-k}} = u_{(d-k)}$.

  \underline{Case 1: $u_{i_{k+1}} < 0$ and $u_{i_{d-k}} \leq 0$.} We have
  \begin{equation*}
    \langle \vw, \vx \rangle_k = \tsum_k(\vw \odot \vx) = u_{i_{k+1}} + \dots + u_{i_{d-k}} < 0.
  \end{equation*}
  This means that
  \begin{equation}
    \label{eq:case-1-I1-exists-positive}
    I_1(\vw, \vx)= \one{\exists \vxp \in \mB_0(\vx, k): \langle \vw, \vxp \rangle_k \geq 0}.
  \end{equation}
  On the other hand, note that $u_{i_{k+1}} < 0$ implies that $u_{i_1} , \dots,
  u_{i_k} < 0$. Since for all $i \in [d]$, we have $u_{i} = w_{i} x_{i}$, this means $w_{i_1}, \dots, w_{i_k}\neq 0$ and
  \begin{equation*}
    |\{i:u_i \leq 0, w_i \neq 0 \}| \geq |\{i_1, \dots, i_k\}| = k.
  \end{equation*}
  Therefore, Lemma~\ref{lem:inf-sup-w-x-k-equality-cond} implies that
  \begin{equation*}
    \max\{\langle \vw, \vxp \rangle_k: \vxp \in \mB_0(\vx, k)\}= u_{(2k+1)} + \dots + u_{(d)}.
  \end{equation*}
  Combining this with~\eqref{eq:case-1-I1-exists-positive}, we realize that
  \begin{equation}
    \label{eq:case-1-I1-one-2k+1-d-g-0}
  I_1(\vw, \vx) = \one{u_{(2k+1)} + \dots + u_{(d)} \geq 0}.
  \end{equation}
  Furthermore, since $u_{(d-2k)} \leq u_{(d-k)} = u_{i_{d-k}} \leq 0$, we have
  $u_{(1)}, \dots, u_{(d-2k)} \leq 0$. But $u_{(1)} \leq u_{(k+1)} = u_{i_{k+1}}
  < 0$. Thereby
  \begin{equation*}
    u_{(1)} + \dots + u_{(d-2k)} < 0.
  \end{equation*}
  Consequently,
  \begin{equation*}
    \one{u_{(2k+1)} + \dots + u_{(d)} \geq 0} = \one{\sgn(u_{(1)} + \dots + u_{(d-2k)}) \neq \sgn(u_{(2k+1)} + \dots + u_{(d)})} = I_2(\vw, \vx).
  \end{equation*}
  Combining this with~\eqref{eq:case-1-I1-one-2k+1-d-g-0}, we realize that
  $I_1(\vw, \vx) = I_2(\vw, \vx)$.

  \underline{Case 2: $u_{i_{k+1}} < 0$ and $u_{i_{d-k}} > 0$.} Note that for $1
  \leq j \leq k$, we have 
  $u_{i_j} \leq u_{i_{k+1}} < 0$. Also, since $u_i = w_i x_i$  for all $i\in
  [d]$, this means $w_{i_j} \neq 0$ for $1 \leq j \leq k$. Thereby
  \begin{equation*}
    |\{i:u_i \leq 0, w_i \neq 0\}| \geq |\{i_1, \dots, i_k\}| = k.
  \end{equation*}
  Consequently, Lemma~\ref{lem:inf-sup-w-x-k-equality-cond} implies that
  \begin{equation}
    \label{eq:case-2-max-u}
    \max\{\langle \vw, \vxp \rangle_k: \vxp \in \mB_0(\vx, k)\} = u_{(2k+1)} + \dots + u_{(d)}.
  \end{equation}
  Likewise, $u_{i_{d-k}} > 0$ implies that for $d-k+1 \leq j \leq d$, we
  have $u_{i_j} > 0$ and $w_{i_j} \neq 0$. Hence,
  \begin{equation*}
    |\{i: u_i \geq 0, w_i \neq 0\}| \geq |\{i_{d-k+1}, \dots, i_d\}| = k,
  \end{equation*}
  and from Lemma~\ref{lem:inf-sup-w-x-k-equality-cond} we have
  \begin{equation}
    \label{eq:case-2-min-u}
    \min\{\langle \vw, \vxp \rangle_k: \vxp \in \mB_0(\vx, k)\} = u_{(1)} + \dots + u_{(d-2k)}.
  \end{equation}
  Combining~\eqref{eq:case-2-max-u} and~\eqref{eq:case-2-min-u}, we realize that
  \begin{align*}
    I_1(\vw, \vx) &= \one{\exists \vxp \in \mB_0(\vx, k): \sgn(\langle \vw, \vxp \rangle_k) \neq \sgn(\langle \vw, \vx \rangle_k)} \\
                  &= \one{\sgn(    \max\{\langle \vw, \vxp \rangle_k: \vxp \in \mB_0(\vx, k)\}) \neq \sgn(    \min\{\langle \vw, \vxp \rangle_k: \vxp \in \mB_0(\vx, k)\})} \\
                  &= \one{\sgn(u_{(2k+1)} + \dots + u_{(d)}) \neq \sgn(u_{(1)} + \dots + u_{(d-2k)})} \\
    &= I_2(\vw, \vx).
  \end{align*}

  \underline{Case 3: $u_{i_{k+1}} = u_{i_{d-k}}= 0$.} Note that in this case, we
  have $u_{i_{k+1}} = \dots = u_{i_{d-k}} = 0$ and 
  \begin{equation*}
    \langle  \vw, \vx \rangle_k = u_{(k+1)} + \dots + u_{(d-k)} = u_{i_{k+1}} + \dots + u_{i_{d-k}} = 0.
  \end{equation*}
  Thereby,
  \begin{equation}
    \label{eq:case-3-I1-I3-simp}
    I_1(\vw, \vx) = \one{\exists \vxp \in \mB_0(\vx, k): \langle \vw, \vxp \rangle_k < 0} =: I_3(\vw, \vx).
  \end{equation}
  On the other hand, $u_{i_{k+1}} = 0$ implies $u_{i_j} \geq 0$ for $j \geq
  k+1$. Therefore
  \begin{equation*}
    u_{(2k+1)} + \dots + u_{(d)} \geq 0.
  \end{equation*}
  Hence, we have
  \begin{equation}
    \label{eq:case-3-I2-I4-simp}
    \begin{aligned}
      I_2(\vw, \vx) &= \one{\sgn(u_{(1)}+ \dots + u_{(d-2k)}) \neq \sgn(u_{(2k+1)} + \dots u_{(d)})} \\
      &= \one{\sgn(u_{(1)}+ \dots + u_{(d-2k)}) \neq 1} \\
      &= \one{u_{(1)}+ \dots + u_{(d-2k)} < 0} \\
      &=:I_4(\vw, \vx).
    \end{aligned}
  \end{equation}
  Combining~\eqref{eq:case-3-I1-I3-simp} and \eqref{eq:case-3-I2-I4-simp}, we
  realize that in order to show $I_1(\vw, \vx) = I_2(\vw, \vx)$, it suffices to
  show $I_3(\vw, \vx) = I_4(\vw, \vx)$. We verify this by considering 3 cases:
  \begin{itemize}
  \item   Assume that $I_4(\vw, \vx) = 0$.
    Then, using Lemma~\ref{lem:inf-sup-w-x-k-equality-cond}, we have
  \begin{equation*}
    \inf\{\langle \vw, \vxp \rangle_k: \vxp \in \mB_0(\vx, k)\} \geq u_{(1)} + \dots + u_{(d-2k)} \geq 0.
  \end{equation*}
  This means that for all $\vxp \in \mB_0(\vx, k)$, we have $\langle  \vw, \vxp
  \rangle_k \geq 0$ and $I_3(\vw, \vx) = 0 = I_4(\vw, \vx)$. 
\item Assume that $I_4(\vw, \vx) = 1$ and $|\{i: u_i \geq 0, w_i \neq 0\}| \geq
  k$. Then, from Lemma~\ref{lem:inf-sup-w-x-k-equality-cond}, we have
  \begin{equation*}
    \min\{\langle \vw, \vxp \rangle_k: \vxp \in \mB_0(\vx, k)\} = u_{(1)} + \dots +u_{(d-2k)}.  
  \end{equation*}
  Consequently, we have
  \begin{align*}
    I_3(\vw, \vx) &= \one{\exists \vxp \in \mB_0(\vx, k): \langle \vw, \vxp \rangle_k < 0} \\
                  &= \one{    \min\{\langle \vw, \vxp \rangle_k: \vxp \in \mB_0(\vx, k)\} < 0} \\
                  &= \one{u_{(1)} + \dots +u_{(d-2k)} < 0} \\
                  &= I_4(\vw, \vx).
  \end{align*}
\item Assume that $I_4(\vw, \vx) = 1$ and $|\{i: u_i \geq 0, w_i \neq 0\}| <
  k$. Note that by assumption of the lemma, we have $|\{i \in [d]: w_i \neq 0\}|
  > k$. We may write
  \begin{equation*}
    |\admiss_{\vw}| = |\{i \in [d]: w_i \neq 0\}| = |\admiss_{\vw}^+| + |\admiss_{\vw}^-|,
  \end{equation*}
  where
  \begin{align*}
    \admiss_{\vw}^+ &:= \{i \in [d]: u_i \geq 0, w_i \neq 0\} \\
    \admiss_{\vw}^- &:= \{i \in [d]: u_i < 0, w_i \neq 0\}.
  \end{align*}
  Since $|\admiss_{\vw}^+| < k$ and $|\admiss_{\vw}^+| + |\admiss_{\vw}^-| > k$,
  we have $|\admiss_{\vw}^-| > 0$.
  Now, we construct $\vxp \in
  \mB_0(\vx, k)$ as
  \begin{equation*}
    x'_i =
    \begin{cases}
      -w_i & i \in \admiss_{\vw}^+ \\
      x_i & \text{otherwise}.
    \end{cases}
  \end{equation*}
  Note that $|\admiss_{\vw}^+| < k$ guarantees that $\vxp \in \mB_0(\vx, k)$.
  Let $\vup := \vw \odot \vxp$ and note that for $i \in \admiss_{\vw}^+$, $u'_i
  = w_i x'_i = - w_i^2$. But $w_i \neq 0$ for $i \in \admiss_{\vw}^+$. Therefore, $u'_i <
  0$ for $i \in \admiss_{\vw}^+$. On the other hand, for $i \in
  \admiss_{\vw}^-$, we have $u'_i = u_i < 0$. Furthermore, for $i \in [d]
  \setminus \admiss_{\vw}$, we have $u'_i = u_i = w_i x_i = 0$. To sum up, we
  showed that for $i \in \admiss_{\vw} = \admiss_{\vw}^+ \cup \admiss_{\vw}^-$,
  we have $u'_i < 0$, and for $i \notin \admiss_{\vw}$, we have $u'_i = 0$. Note
  that $|\admiss_{\vw}| > k$. This means that there at least $k+1$ negative
  coordinates in $\vup$, and the remaining of the coordinates are zero.
  Therefore, at least one negative value  will survive after truncating the
  bottom $k$ coordinates in $\vup$, plus perhaps some other negative and some
  other zero values. Hence,
  \begin{equation*}
    \langle \vw, \vxp \rangle_k = \tsum_k(\vup) < 0.
  \end{equation*}
  Consequently, we have
  \begin{equation*}
    I_3(\vw, \vx) = \one{\exists \vxp \in \mB_0(\vx, k): \langle \vw, \vxp \rangle_k < 0} = 1 = I_4(\vw, \vx).
  \end{equation*}
\end{itemize}
To sum up, we showed that $I_3(\vw, \vx) = I_4(\vw, \vx)$ in all the cases, and
hence $I_1(\vw, \vx) = I_2(\vw, \vx)$ as was discussed above.

\underline{Case 4: $u_{i_{k+1}} \geq 0$ and $u_{i_{d-k}} > 0$.} We have
\begin{equation*}
  \langle \vw, \vx \rangle_k = \tsum_k(\vw \odot \vx) = u_{i_{k+1}} + \dots + u_{i_{d-k}} > 0.
\end{equation*}
This means that
\begin{equation}
  \label{eq:case-4-I1-exists-negative}
  I_1(\vw, \vx) = \one{\exists \vxp \in \mB_0(\vx, k): \langle \vw, \vx \rangle_k < 0}.
\end{equation}
On the other hand, since $u_{i_{d-k}} > 0$, for $j \geq d-k$, we have $u_{i_j}
\geq u_{i_{d-k}} > 0$. Since $u_i = w_i x_i$, this means that for $d-k \leq j
\leq d$, we have $u_{i_j} \geq 0$ and $w_{i_j} \neq 0$. Thereby
\begin{equation*}
  |\{i\in [d]: u_i \geq 0, w_i \neq 0\}| \geq |\{i_{d-k+1}, \dots, i_d\}| = k.
\end{equation*}
Consequently, Lemma~\ref{lem:inf-sup-w-x-k-equality-cond} implies that
\begin{equation*}
  \min\{\langle \vw, \vxp \rangle_k: \vxp \in \mB_0(\vx, k)\} = u_{(1)} + \dots + u_{(d-2k)}.
\end{equation*}
Combining this with~\eqref{eq:case-4-I1-exists-negative}, we have
\begin{equation}
  \label{eq:case-4-I1-one-1-d-2k-negative}
  I_1(\vw, \vx) = \one{u_{(1)} + \dots u_{(d-2k)} < 0}.
\end{equation}
Furthermore, for $j \geq 2k+1$, we have $u_{i_j} \geq u_{i_{k+1}} \geq 0$. This
means that $u_{(2k+1)} + \dots + u_{(d)} \geq 0$. Using this
in~\eqref{eq:case-4-I1-one-1-d-2k-negative} above, we realize that
\begin{equation*}
  I_1(\vw, \vx) = \one{\sgn(u_{(1)} + \dots + u_{(d-2k)}) \neq \sgn(u_{(2k+1)} + \dots + u_{(d)})} = I_2(\vw, \vx).
\end{equation*}
Consequently, we verified that $I_1(\vw, \vx) = I_2(\vw, \vx)$ in all the four
cases. This completes the proof.
\end{proof}

\begin{proof}[Proof of Lemma~\ref{lem:inf-sup-w-x-k-equality-cond}]
  We give the proof of the lower bound (part 1). The proof of the upper bound
  (part 2) is similar and hence is omitted.
  Note that
  \begin{equation*}
    \{\vw \odot \vxp: \vxp \in \mB_0(\vx, k)\} \subset \mB_0(\vu, k). 
  \end{equation*}
  Therefore, Lemma~\ref{lem:tsum-l0-min-max} implies that
  \begin{equation}
    \label{eq:inf-wxp-u1-proof-1}
  \begin{aligned}
    \inf\{\langle \vw, \vxp \rangle_k: \vxp \in \mB_0(\vx, k)\} &\geq  \min \{\tsum_k(\vup): \vup \in \mB_0(\vu, k)\} \\
    &= u_{(1)} + \dots + u_{(d-2k)}.
  \end{aligned}
  \end{equation}
  On the other hand, if $w_i \neq 0$ for all $i \in [d]$, or equivalently if
  $|\admiss_{\vw}^c| = 0$, we have $\{\vw \odot
  \vxp: \vxp \in \mB_0(\vx, k)\} \subset \mB_0(\vu, k)$ and
  indeed~\eqref{eq:inf-wxp-u1-proof-1} becomes inequality. Now we show that if
  $|\admiss_{\vw}^c| \neq 0$ and $|\{i: u_i \geq 0 \text{ and } w_i \neq 0\}|
  \geq k$, then the inequality also becomes equality. In order to show this, let
  \begin{equation*}
    u_{i_1} \leq u_{i_2} \leq \dots \leq u_{i_d},
  \end{equation*}
  be an ordering of the coordinates in $\vu$ in ascending order. Note that since
  $\{i: u_i \geq 0, w_i \neq 0\}| \geq k$, we can choose  the  ordering so
  that the coordinates corresponding to the top $k$ elements fall in
  $\admiss_{\vw}$, i.e.
  \begin{equation}
    \label{eq:i-d-k+1-id-in-admiss}
    \{i_{d-k+1}, \dots, i_d\} \subseteq \admiss_{\vw}.
  \end{equation}
  Now, define $\vxp \in \mB_0(\vx, k)$ as follows. For $d - k +1 \leq j \leq d$,
  let $x'_{i_j}$ be such that $w_{i_j} x'_{i_j} = u_{(1)} - 1$. Note that this
  is doable since due to~\eqref{eq:i-d-k+1-id-in-admiss}, we have $w_{i_j} \neq
  0$ for $d - k +1 \leq j \leq d$. On the other hand, for $j \leq d -k$, let
  $x'_{i_j} = x_{i_j}$. Note that with this specific perturbation, the $k$
  minimum values in $\vw \odot \vxp$ are
  precisely the coordinates $i_{d-k+1}, \dots, i_d$. Therefore
  \begin{equation*}
    \langle \vw, \vxp \rangle_k = \tsum_k(\vw \odot \vxp) = u_{(1)} + \dots u_{(d-2k)},
  \end{equation*}
  and hence~\eqref{eq:inf-wxp-u1-proof-1} becomes equality.
\end{proof}


\section{Proof of Lemma~\ref{lem:vw-at-most-k-nonzero-trp-zero}}
\label{app:lem-at-most-k-nonzero-proof}

\begin{proof}[Proof of Lemma~\ref{lem:vw-at-most-k-nonzero-trp-zero}]
  The assumption $|\{i \in [d]: w_i \neq 0\}| \leq k$ means
  that all but at most $k$ elements in $\vw$ are zero. This implies that all but
  at most $k$ elements of $\vw \odot \vx$ are  zero, and there are at least
  $d-k$ zero elements in $\vw \odot \vx$. Therefore, since there are no more
  than $k$  nonzero elements, after truncation, they will be removed and only
  zero elements will survive. This means $\langle \vw, \vx \rangle_k =
  \tsum_k(\vw \odot \vx) = 0$.
\end{proof}


\section{Proof of Proposition~\ref{prop:max-l0-growth-bound}}
\label{app:prop-max-growth-proof}

\begin{proof}[Proof of Proposition~\ref{prop:max-l0-growth-bound}]
Fix $\vx_1, \dots, \vx_n \in \reals^d$ and $y_1, \dots, y_n \in \{\pm 1\}$. From
Lemma~\ref{lem:vw-at-most-k-nonzero-trp-zero}, if $|\{i \in [d]: w_i \neq 0\}|
\leq k$, we have $\langle \vw, \vx'_i \rangle_k = 0$ for all $\vx'_i \in
\mB_0(\vx, k)$ and $1 \leq i \leq n$. This means that $\tT_{\vx, k}(\vx_i, y_i)
= \one{y_i = -1}$. Thereby,
\begin{equation}
  \label{eq:conf-count-tT-1+-bound}
  |\{(\tT_{\vw, k}(\vx_1, y_1), \dots, \tT_{\vw, k}(\vx_n, y_n)): \tT_{\vw, k} \in \tmT_{d,k}\}| \leq 1 +   |\{(\tT_{\vw, k}(\vx_1, y_1), \dots, \tT_{\vw, k}(\vx_n, y_n)): \vw \in \mW_{>k}\}|, 
\end{equation}
where
\begin{equation*}
  \mW_{> k} := \{\vw \in \reals^d: |\{i \in [d]: w_i \neq 0\}| > k\}.
\end{equation*}
But from~\eqref{eq:tT-max-1-I1-I2-1} and Lemma~\ref{lem:Aw-large-k-I1-I2}, for
$\vw \in \mW_{>k}$ and $1 \leq i \leq n$, we have
\begin{equation*}
  \tT_{\vw, k}(\vx_i, y_i) = \one{\sgn(\langle \vw, \vx_i \rangle_k) \neq y_i} \vee \one{\sgn(u^{(i)}_{(1)} + \dots + u^{(i)}_{(d-2k)}) \neq \sgn(u^{(i)}_{(2k+1)} + \dots u^{(i)}_{(d)})},
\end{equation*}
where $\vu^{(i)} := \vw \odot \vx_i$. Note that all the sign terms in the above
expression are of the form $\sgn(\sum_{j \in \mJ}(u^{(i)}_j))$ for some $\mJ
\subset [d]$ with $|\mJ| = d-2k$. Specifically, with $\mJ_1 := \{k+1, \dots,
d-k\}$, $\mJ_2:= \{1, \dots, d-2k\}$, and $\mJ_3 := \{2k+1, \dots, d\}$, we have
\begin{align*}
  \sgn(\langle \vw, \vx_i \rangle_k) &= \sgn\left( \sum_{j \in \mJ_1} u^{(i)}_j  \right) \\
  \sgn(u^{(i)}_{(1)} + \dots + u^{(i)}_{(d-2k)}) &= \sgn\left( \sum_{j \in \mJ_2}u^{(i)}_j \right)\\
  \sgn(u^{(i)}_{(2k+1)} + \dots u^{(i)}_{(d)}) &= \sgn\left( \sum_{j \in \mJ_3}u^{(i)}_j \right).
\end{align*}
Thereby, Lemma~\ref{lem:trp-sign-code} implies that all of these sign terms are
functions of $(\sgn(\langle \vw, \vx_{i,j} \rangle): 1 \leq j \leq m)$ where $m
= \binom{d}{2k} + \binom{d}{2}$, for $1 \leq j \leq \binom{d}{2k}$, $\vx_{i,j} =
\vx_i \odot \valpha_j$, and for $1 + \binom{d}{2k} \leq j \leq m$, $\vx_{i,j} =
\vx_i \odot \vbeta_{j - \binom{d}{2k}}$. Therefore, we may employ a proof
strategy  similar to that of Proposition~\ref{prop:tip-growth-bound}. More
precisely, for $\vw \in \mW_{>k}$, we define the $n \times m$ matrix $M(\vw)$ with $\pm
  1$ entries as follows:
  \begin{equation*}
    M(\vw) :=
    \begin{bmatrix}
      \sgn(\langle \vw, \vx_{1,1} \rangle)  & \dots & \sgn(\langle \vw, \vx_{1,m} \rangle) \\
      \vdots & \ddots & \vdots \\
      \sgn(\langle \vw, \vx_{n,1} \rangle) & \dots & \sgn(\langle \vw, \vx_{n,m} \rangle)
    \end{bmatrix}
  \end{equation*}
  The above discussion implies that
  \begin{equation}
    \label{eq:tT-conf-count-mW-big-k-Mw-bound}
    |\{(\tT_{\vw, k}(\vx_1, y_1), \dots, \tT_{\vw, k}(\vx_n, y_n)): \vw \in \mW_{>k}\}| \leq |\{M(\vw): \vw \in \mW_{> k}\}| \leq |\{M(\vw): \vw \in \reals^d\}|.
  \end{equation}
  But as we saw in the proof of Proposition~\ref{prop:tip-growth-bound}, since
  $n > d+1$, we have
  \begin{equation*}
    |\{M(\vw): \vw \in \reals^d\}| \leq \left( \frac{e n \left( \binom{d}{2k} + \binom{d}{2} \right)}{d} \right)^{d}.
  \end{equation*}
  Comparing this with~\eqref{eq:tT-conf-count-mW-big-k-Mw-bound}
  and~\eqref{eq:conf-count-tT-1+-bound}, we realize that
  \begin{equation*}
    |\{(\tT_{\vw, k}(\vx_1, y_1), \dots, \tT_{\vw, k}(\vx_n, y_n)): \tT_{\vw, k} \in \tmT_{d,k}\}| \leq 1 + \left( \frac{e n \left( \binom{d}{2k} + \binom{d}{2} \right)}{d} \right)^{d}.
  \end{equation*}
  Since this bound holds for all $\vx_1, \dots, \vx_n$ and $y_1, \dots, y_n$, we
  realize that
  \begin{align*}
    \Pi_{\tmT_{d,k}}(n) &= \max\{|\{(\tT_{\vw, k}(\vx_1, y_1), \dots, \tT_{\vw, k}(\vx_n, y_n)): \tT_{\vw, k} \in \tmT_{d,k}\}|: \vx_1, \dots, \vx_n \in \reals^d, y_1, \dots, y_n \in \{\pm 1\}\} \\
    &\leq 1 + \left( \frac{e n \left( \binom{d}{2k} + \binom{d}{2} \right)}{d} \right)^{d}.
  \end{align*}
  This completes the proof.
\end{proof}


\section{Proof of Theorem~\ref{thm:main-result}}
\label{app:main-thm-proof}

Using \cite[Theorem 26.5]{shalev2014understanding}, since our loss function
$\tell$ defined in~\eqref{eq:tell-def} is bounded by $1$, with probability at least
$1-\delta$, we have
\begin{equation}
  \label{eq:gen-bound-rad-1}
  \roberr_{\mD}(\hvw_n, k) \leq \roberr_{\mD}(\vw^\ast, k) + 2R_n(\tmT_{d,k})  + 5 \sqrt{\frac{2 \log \frac{8}{\delta}}{n}}.
\end{equation}
On the other hand,  using Massart lemma (see, for instance,
\cite[Lemma 26.8]{shalev2014understanding}, )we have 
\begin{equation}
  \label{eq:rad-growth-bound-in-proof}
  R_n(\mF) \leq \sqrt{\frac{2 \log ( \Pi_{\mF}(n))}{n}}.
\end{equation}
Using Proposition~\ref{prop:max-l0-growth-bound}, for $n > d+1$, we have
\begin{equation*}
  \Pi_{\tmT_{d,k}}(n) \leq 1 + \left( \frac{e n \left( \binom{d}{2k} + \binom{d}{2} \right)}{d} \right)^{d} \leq 2 \left( \frac{e n \left( \binom{d}{2k} + \binom{d}{2} \right)}{d} \right)^{d}.
\end{equation*}
Therefore,
\begin{align*}
  2 \log (2 \Pi_{\tmT_{d,k}}(n)) &\leq 2 \log 2 + 2d \log \frac{en \left[ \binom{d}{2k} + \binom{d}{2} \right]}{d} \\
  &\leq (2 + 2 \log 2)d  \log \frac{en \left[  \binom{d}{2k} + \binom{d}{2}  \right]}{d}.
\end{align*}
Using this in~\eqref{eq:rad-growth-bound-in-proof} and substituting
into~\eqref{eq:gen-bound-rad-1}, we realize that the desired bound in
Theorem~\ref{thm:main-result} holds with $c = 2\sqrt{2 + 2 \log 2}$. This
completes the proof.

\end{document}